\newcommand{\R}{\mathds{R}}
\newcommand{\E}{\mathds{E}}
\DeclareMathOperator*{\argmax}{arg\,max}
\DeclareMathOperator*{\argmin}{arg\,min}
\DeclareMathOperator*{\rowmax}{rowmax}
\DeclareMathOperator*{\colmax}{colmax}
\DeclareMathOperator*{\relu}{relu}
\DeclareMathOperator*{\softmax}{softmax}
\declaretheoremstyle[
    style=definition, 
    bodyfont=\normalfont\itshape,
    numberwithin=section
]{mystyle}
\declaretheoremstyle[
    headfont={\it}, 
    bodyfont=\normalfont,
]{myproofstyle}
\declaretheorem[style=mystyle]{theorem}
\declaretheorem[style=mystyle, sibling=theorem]{definition, example, lemma, proposition, remark, corollary, conjecture}
\title{ElementaryNet: A Non-Strategic Neural Network for\\Predicting Human Behavior in Normal-Form Games}
\author{
    Greg d'Eon\textsuperscript{\rm 1},
    Hala Murad\textsuperscript{\rm 1},
    Kevin Leyton-Brown\textsuperscript{\rm 1},
    James R. Wright\textsuperscript{\rm 2},
}
\begin{document}

\maketitle

\begin{abstract}
Behavioral game theory models serve two purposes: yielding insights into how human decision-making works, and predicting how people would behave in novel strategic settings.
A system called GameNet represents the state of the art for predicting human behavior in the setting of unrepeated simultaneous-move games, combining a simple ``level-$k$'' model of strategic reasoning with a complex neural network model of non-strategic ``level-0'' behavior.
Although this reliance on well-established ideas from cognitive science ought to make GameNet interpretable, the flexibility of its level-0 model raises the possibility that it is able to emulate strategic reasoning.
In this work, we prove that GameNet's level-0 model is indeed too general.
We then introduce ElementaryNet, a novel neural network that is provably incapable of expressing strategic behavior.
We show that these additional restrictions are empirically harmless, with ElementaryNet and GameNet having statistically indistinguishable performance.
We then show how it is possible to derive insights about human behavior by varying ElementaryNet's features and interpreting its parameters, finding evidence of iterative reasoning, learning about the depth of this reasoning process, and showing the value of a rich level-0 specification.
\end{abstract}

\begin{links}
    \link{Code \& data}{https://github.com/gregdeon/elementarynet}
\end{links}

\section{Introduction}
Human behavior in strategic settings often deviates significantly from the predictions of classical game theory: for instance, humans often play dominated actions (as in the unrepeated Prisoner's Dilemma), which a fully rational agent would not.
Behavioral game theory aims to address these shortcomings by developing predictive models of human strategic behavior. 
The purpose of such models is two-fold.
First, such models that have been trained on experimental data can be analyzed to yield insight about human psychology. 
In this sense, classical game theory can be seen as a zero-parameter model that predicts poorly; 
the goal here is to find models that make much more accurate predictions, but are as simple and interpretable as possible.
Second, we might simply build models that are as accurate as possible. Such models are useful for developing agents that will interact with humans or for designing systems of rules (``mechanisms'') that will perform well when faced with human participants. 

Past work has made substantial progress towards the first goal, demonstrating that so-called iterative reasoning models with just two parameters are surprisingly good at predicting behavior in new strategic settings, and hence giving insight into the way human subjects reason. 
Let us describe Quantal Cognitive Hierarchy (QCH), the best performing such model~\citep{Camerer2004}. 
QCH predicts that people perform strategic reasoning at some finite ``level''. 
The model includes a probability distribution describing the proportion of agents at each level; when this distribution is Poisson (having one parameter), we call the model QCHp. 
Level-0 reasoners are nonstrategic: they perform some arbitrary computation that falls short of forming beliefs about their opponents and best responding to these beliefs. 
\citet{Camerer2004} simply assert that level-0 reasoners choose actions uniformly at random, which is clearly nonstrategic (and parameter free); we call this instantiation of the model Uniform + QCHp. 
Level-1 reasoners quantally best respond to level-0 agents, where quantal response is a noisy version of best response that depends on a ``precision'' parameter. 
Level-$k$ reasoners for $k \geq 2$ are more complex: they know the distribution over levels $0, \ldots, k-1$, and quantally best respond to the aggregate distribution of play by all lower-level agents. 
More recent work showed that even better performance can be achieved by hand-crafting a richer level-0 model based on insights from cognitive psychology, at the expense of adding more parameters \citep{JAIR}.

Regarding the second goal of raw predictive performance, it is perhaps unsurprising that the current state-of-the-art model for predicting human behavior in unrepeated normal-form games, dubbed GameNet~\citep{GameNet}, is based on a neural network.
GameNet extends the Uniform + QCHp architecture by replacing the uniform level-0 specification with a learned model and the Poisson distribution by a finite-depth histogram.
The learned model uses a permutation-equivariant architecture to express the inductive bias that an action should be played with the same probability regardless of its index in the game matrix, but is otherwise unrestricted. 
Empirically, GameNet dramatically outperforms Uniform + QCHp. 

Although its neural network component is an uninterpretable black box, GameNet was based on the QCH architecture in part to allow for meaningful interpretation: one can ask about the shape of the level distribution, the precision parameter, and the way model performance varies when levels are added or removed.
In this vein, \citet{GameNet} made a striking observation: GameNet performed best when it was restricted to predict that all agents were level-0. 
What should we make of this result? 
Does it show that iterative strategic reasoning is a dead end, and that human subjects are better described in some different way? 
Or, does it show that a sufficiently general neural network is already able to simulate iterative strategic reasoning, and hence that GameNet's apparent interpretability was a mirage? 
Finally, how would we tell the difference?

In this work, we make three main contributions.
First, in \Cref{sec:gamenet-strategic}, we show that GameNet's purportedly level-0 neural network \emph{is} capable of strategic reasoning, and is hence inappropriate to use for describing level-0 agents. 
Our proof is constructive: we give a specific setting of its parameters that computes quantal best response to maxmax, a strategic model.
This finding helps to explain why adding levels of strategic reasoning did not improve GameNet's performance.

Second, in \Cref{sec:enet}, we introduce \emph{ElementaryNet}, a new restriction of GameNet's architecture that is only capable of non-strategic behavior.
Our architecture is rooted in the concept of elementary models~\cite{WLBnonstrategic}, a set of behavioral models which are provably incapable of representing strategic behavior.
Elementary models compute a real-valued ``potential'' for each outcome of the game that summarizes the set of utilities for all the players into a single number, and then compute a distribution of play using only those potentials.
Intuitively, this restriction to a single summary per outcome prevents the model from forming a belief about the opponent's play (based on the opponent's utilities), and then best responding to that belief (based on the model's own utilities). 
Our architecture is a convex combination of neural networks whose inputs are summarized by a single potential function each; it is thus a convex combination of elementary models, which is provably non-strategic in a precise, formal sense.

Finally, in \Cref{sec:experiments}, we perform extensive experiments on ElementaryNet.
We first show that, despite the additional restrictions on its architecture, ElementaryNet combined with an iterative reasoning model achieves predictive power statistically indistinguishable from that of GameNet.
We then show how the model's clean delineation between strategic and non-strategic components makes it possible to obtain interpretable insights about human behavior by varying different features of the model.
In particular, we show that ElementaryNet performs significantly worse without a strategic model, demonstrating that iterative reasoning is indeed a good model of human behavior.  
We show that restricting the level-0 model---to only consider the player's own payoffs, or to use a fixed set of four basis potentials---degrades performance, demonstrating the value of a rich level-0 specification.
Finally, we successfully interpret the parameters of the iterative reasoning model that are co-learned with ElementaryNet, in exactly the same way that was uninformative with GameNet.

\section{Preliminaries}
We begin by fixing notation.
A \textit{2-player $n \times m$ normal-form game} is a tuple $G = (A, u)$, where $A = A_1 \times A_2$ is the set of \textit{action profiles}; $A_1 = \{1, \dots, n\}$ and $A_2 = \{1, ..., m\}$ are the sets of \textit{actions} available to agents 1 and 2, respectively; and $u = (u_1, u_2)$ is profile of \textit{utility functions} $u_i: A \to \R$, each mapping an action profile to a real-valued utility for agent $i$.
For convenience, we will sometimes refer to the utility matrices $U^1 = [u_1(a_i, a_j)]_{ij}$ and $U^2 = [u_2(a_i, a_j)]_{ij}$.

A \textit{behavior} $s_i$ is an element of the simplex $\Delta(A_i)$, representing a distribution over agent $i$'s actions; we use the non-standard term ``behavior'', rather than the more standard ``strategy'', to avoid other awkward terminology, such as a ``non-strategic strategy''.
A \textit{behavior profile} is a tuple of behaviors $s = (s_1, s_2)$.
Overloading notation, we denote an agent's expected utility as $u_i(s)$ = $\E_{a \sim s} u_i(a)$.
For either agent $i$, we write $s_{-i}$ to represent the behavior of the other agent, and $(s_i, s_{-i})$ to refer to a behavior profile.
A \emph{behavioral model} is a function $f_i$ that maps a game $G = (A, u)$ to a behavior $f_i(G) \in \Delta(A_i)$.

\subsection{Existing Behavioral Models}
We now describe several behavioral models used in prior behavioral game theory work.
A common building block in many of these models is the concept of quantal best response.
\begin{definition} 
    The (logit) \emph{quantal best response} to a strategy $s_{-i}$ is 
    $QBR_i(s_{-i}; \lambda, G)(a_i) \propto \exp[\lambda \cdot u_i(a_i, s_{-i})],$
    where $\lambda$ (the \emph{precision} parameter) controls the agent's sensitivity to differences in utilities.
\end{definition}

The quantal cognitive hierarchy (QCH) model~\cite[e.g.,][]{WLB2017} combines quantal best response with a model of iterative reasoning.
    \begin{definition} 
    Let $G$ be a game, $\lambda \in \R$ be a precision, $s^0$ be a profile of level-0 behaviors in $G$, and $D$ be a probability distribution over levels. 
    Then, the \emph{level-$k$ quantal hierarchical behavior} is defined as
    $s^k_i = QBR_i(s_{-i}^{0:k-1}; \lambda, G),$
    where $s^{0:k-1}_i(a_i) \propto \sum_{m=0}^{k-1} D(m) s^m_i(a_i)$.
    The \emph{quantal cognitive hierarchy} behavior is the weighted average of these behaviors $QCH_i(a_i) = \sum_{k} D(k) s^k_i(a_i)$.
\end{definition}

QCH models depend critically on the level-0 model, which determines not just the behavior of level-0 agents, but also of higher-level agents who react to it.
A natural, simple choice is the uniform distribution, which was originally used in the Level-$k$~\citep{Nagel1995, CostaGomes2001} and Cognitive Hierarchy~\cite{Camerer2004} models.
However, \citet{JAIR} 
found improvements using richer level-0 models based on heuristics from cognitive science. 
For example, one such heuristic is the \emph{maxmax behavioral model}, 
\[
    f^{\text{maxmax}}_i(a_i) \propto \begin{cases}
        1, & a_i \in \argmax_{a_i'} \max_{a_{-i}} u_i(a_i', a_{-i}); \\ 
        0, &\text{otherwise.}
    \end{cases} 
\]
Notably, this heuristic depends only on the agent's utility $u_i$, neglecting the opponent's utility $u_{-i}$.
The other heuristics (see Appendix~\ref{apx:heuristics}) also depend on simple linear combinations of the players' utilities, such as the sum (or \emph{welfare}) $u_1 + u_2$ or the difference (\emph{unfairness}) $u_1 - u_2$.

\subsection{GameNet}
GameNet~\cite{GameNet} is a deep learning architecture for predicting human strategic behavior.
It can be understood as taking the insight that a richer level-0 specification can yield better performance to its logical limit.
Broadly, GameNet consists of two parts: \emph{feature layers}, which are intended to model level-0 behavior, and \emph{action response (AR) layers}, which perform iterative strategic reasoning.
AR layers are essentially a generalization of the QCH model, 
with additional parameters independently controlling each level's precision, the distribution over lower levels to which agents respond, and the transformed utility matrices which they use for this response.
We devote more attention to the feature layers.

Let the 0th hidden layer consist of the two matrices $H^{0,1} = U^1$ and $H^{0,2} = U^2$. For each hidden layer $0 \le \ell < L$, the matrices are first transformed by \emph{pooling units} into
\[
    P^{\ell,c} = \begin{cases}
        H^{\ell,c} & \text{if } c \le C_\ell; \\
        \rowmax(H^{\ell,c-C_\ell}) & \text{if } C_\ell < c \le 2 C_\ell; \\
        \colmax(H^{\ell,c-2C_\ell}) & \text{if } 2 C_\ell < c \le 3 C_\ell,
    \end{cases}
\]

where $\rowmax$ and $\colmax$ are functions that replace each entry of a matrix with the maximum value in its row or column, respectively; that is, for all $X \in \R^{n \times m}$, $\rowmax(X)_{ij} = \max_{1 \le a \le m} X_{ia}$ and $\colmax(X)_{ij} = \max_{1 \le a \le n} X_{aj}$.
Then, the next layer's \emph{hidden units} are
$H^{\ell,c} = \relu\left( \sum_{c'=1}^{3C_{\ell-1}} w^\ell_{c,c'} P^{\ell-1, c'} + b^\ell_c \right)$,
where $\{C_\ell\}_{0:L}$ describe the sizes of each hidden layer, including the input layer with size $C_0 = 2$, and the ReLU operation $\relu(x) = \max\{0, x\}$ is applied pointwise. 

After the final hidden layer, the matrices $\{H^{L,c}\}_{c=1:C_L}$ are transformed into a single distribution over the row player's actions as 
$f = \sum_{c=1}^{C_L} w_c f^c$,
where $f^c = \softmax(\sum_{j} H^{L,c}_{i,j})$, 
$\softmax(x)_i = \exp(x_i) / \sum_{i'} \exp(x_{i'})$, 
and the weights $w_c \in \Delta(C_L)$ are subject to simplex constraints. An analogous predicted distribution over the column player's actions is made by replacing the input utility matrices $U^1$ and $U^2$ with $(U^2)^T$ and $(U^1)^T$, respectively. 
This architecture is summarized in \Cref{fig:gamenet-diagram}.

Compared to a more standard, off-the-shelf architecture---e.g., flattening the utility matrices into a vector of length $2nm$ and applying a feedforward neural network---feature layers have several advantages.
One is that they are \emph{permutation equivariant}, which guarantees that permuting the utility matrices will permute the predictions correspondingly.
Permutation equivariance lowers the number of parameters of the network, making it learn more efficiently and removing the need for data augmentation.
Feature layers are also agnostic to the size of the game, making it possible to learn from heterogeneous data with a variety of action spaces and to generalize to games of new sizes.

\begin{figure}
    \centering
    \includegraphics[width=\columnwidth]{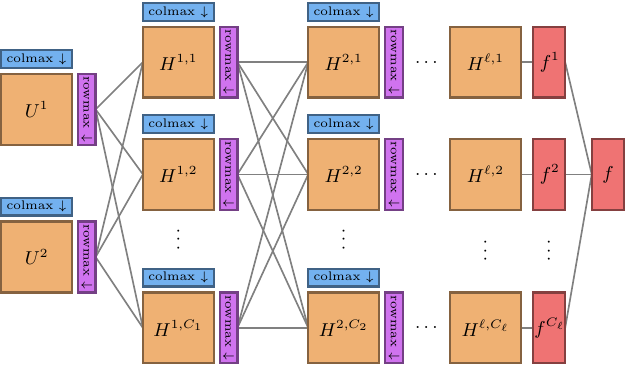}
    \caption{GameNet's feature layers.}
    \label{fig:gamenet-diagram}
\end{figure}

\subsection{Non-Strategic Behavioral Models}
\label{sec:nonstrategic}
To make precise claims about whether or not a model is strategic, we adopt a formal definition of (non-)strategic behavior from \citet{WLBnonstrategic}.
In particular, we use their definition of a \textit{weakly non-strategic model}.
\begin{definition} 
    An action $a_i^+$ in a game $G = (A, u)$ is \emph{$\zeta$-dominant} if $u_i(a_i^+, a_{-i}) > u_i(a_i, a_{-i}) + \zeta$ for all $a_i \neq a_i^+$ and $a_{-i} \in A_{-i}$. 
    Then, a behavioral model $f_i$ is \emph{dominance responsive} if there exists some $\zeta > 0$ such that, in all games $G$ with a $\zeta$-dominant action $a_i^+$, the mode of $f_i$ is $a_i^+$: that is, $f_i(G)(a_i^+) > f_i(G)(a_i)$ for all $a_i \neq a_i^+$.
\end{definition}
\begin{definition} 
    A behavioral model $f_i$ is \emph{weakly non-strategic} if it cannot be represented as quantal best response to some dominance-responsive model $f_{-i}$. 
\end{definition}

\citet{WLBnonstrategic} also define a class of non-strategic behavioral models called \textit{elementary models}.
These models independently compute a ``potential'' for each outcome in the game, then predict an action distribution based only on these potentials.
Intuitively, as long as the potential function is well-behaved, compressing two utilities into one potential creates an information bottleneck that prevents the model from representing strategic behavior. 
\begin{definition} 
    A function $\varphi: \R^2 \to \R$ is \emph{dictatorial} iff it is completely determined by one input: that is, either $\varphi(x, y) = \varphi(x, y')$ for all $x,y,y' \in \R$, or $\varphi(x,y)=\varphi(x',y)$ for all $x,x',y \in \R$.
\end{definition}
\begin{definition} 
    A function $\varphi: \R^2 \to \R$ is \emph{non-encoding} iff, for all $i \in \{1, 2\}$ and $b > 0$, there exist $x, x' \in \R^2$ such that $\varphi(x) = \varphi(x')$ but $|x_i - x'_i| > b$.
\end{definition} 
\begin{definition} 
    An \emph{elementary behavioral model} is a model of the form $f_i(G) = h_i(\Phi(G))$, where 
    \begin{enumerate}
        \item $\Phi$ maps an $n \times m$ game $G$ to a potential matrix $\Phi(G)$ by applying a potential function $\varphi: \R^2 \to \R$ to each utility vector $(u_1(a), u_2(a))$;
        \item $\varphi$ is either dictatorial or non-encoding; and
        \item $h_i$ is an arbitrary function mapping an $n \times m$ potential matrix to a vector of $n$ probabilities. 
    \end{enumerate}
\end{definition}

The key fact that we will leverage 
is that convex combinations of elementary models are weakly non-strategic.
\begin{theorem} \cite[Theorem 5,][]{WLBnonstrategic}%
    \label{thm:combination-of-elementary-nonstrategic}
    Let $g_i^1, \dots, g_i^K$ be elementary behavioral models,
    and let $w_1, \dots, w_K$ be non-negative weights summing to 1.
    Then, the convex combination $f_i(G) = \sum_{k=1}^K w_k g_i^k(G)$ is a weakly non-strategic behavioral model.
\end{theorem}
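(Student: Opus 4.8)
The plan is to argue by contradiction. Suppose the convex combination $f_i=\sum_{k=1}^K w_k g_i^k$ is \emph{not} weakly non-strategic, so that $f_i = QBR_i(f_{-i}; \lambda, \cdot)$ for some dominance-responsive $f_{-i}$ and precision $\lambda$; the degenerate case $\lambda = 0$ makes quantal best response uniform regardless of the opponent, so under the usual convention that precision is positive we take $\lambda > 0$. Unwinding the quantal-best-response form, the hypothesis says that for every game $G$ and every pair of actions $a,a'$ of player $i$,
\[
\log\frac{f_i(G)(a)}{f_i(G)(a')}=\lambda\sum_{a_{-i}}f_{-i}(G)(a_{-i})\,\bigl[u_i(a,a_{-i})-u_i(a',a_{-i})\bigr].
\]
The whole strategy is to build two games $G,G'$ on which the left-hand side is \emph{forced equal} (the mixture cannot tell them apart) yet the right-hand side is \emph{forced to differ in sign} (a best-responder must behave differently), contradicting the identity.

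Two ingredients drive this. To guarantee $f_i(G)=f_i(G')$ it suffices that every component agrees, $g_i^k(G)=g_i^k(G')$, and since $g_i^k$ depends on $G$ only through its potential matrix $\Phi_k(G)$, it suffices to move utilities cell-by-cell along the \emph{level sets} of each $\varphi_k$. The two defining cases of an elementary potential are exactly the tools for this: a \emph{dictatorial} $\varphi_k$ ignores one of the two utilities outright, so changes to that player's payoffs are invisible to $g_i^k$; a \emph{non-encoding} $\varphi_k$ has level sets unbounded in each coordinate (precisely the $i=1,2$ instances of that definition), so we may push the chosen coordinate past any prescribed margin while keeping $\varphi_k$ fixed. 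To force the right-hand side to differ I use one of two levers: either flip player $i$'s own strictly dominant action between $G$ and $G'$, which reverses the sign of the softmax log-odds for \emph{every} opponent behavior and needs no assumption on $f_{-i}$; or hold $U^i$ fixed with affinely independent columns, so that $s\mapsto\softmax(\lambda U^i s)$ is injective on $\Delta(A_{-i})$, and flip the opponent's strictly dominant action, whereupon dominance-responsiveness forces the mode---hence, by injectivity, the whole value---of $f_{-i}$ to differ across $G$ and $G'$.

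I would then organize the argument by which component types are present. If some component is blind to player $i$'s own utilities (a dictatorial potential reading only the opponent), a clean limiting lever applies: drive player $i$'s own dominant row with an unbounded margin so that $QBR_i$ concentrates on the vertex $e_r$; the mixture can reach a vertex only if every component does, but the own-blind component cannot track $r$ (choose the game so it places mass off $r$), giving a contradiction. Symmetrically, a component that reads own utilities but is insensitive to the opponent is handled by the fixed-$U^i$ opponent-dominance flip, and a purely non-encoding component by the own-dominance flip along its level sets.

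The main obstacle is the \emph{simultaneous} reconciliation across heterogeneous components: a single pair $G,G'$ must lie on the common level sets of all $K$ potentials at once while still enacting a genuine dominance flip past the fixed margin $\zeta$. Because the non-encoding guarantee is merely existential---it yields \emph{some} equal-potential pair with large spread in a chosen coordinate, not a family at a fixed value of the other coordinate---threading one construction through several potentials requires care; here I would exploit the model's indifference to game size, adding auxiliary rows and columns to obtain enough independent cells, and take the non-encoding spread (choosing its bound after $\zeta$) large enough to absorb the slack. The genuinely delicate residual case is a mixture that pins \emph{both} players' utilities---for instance, one component reading only $u_i$ injectively and another reading only $u_{-i}$ injectively---so that no nontrivial utility change is invisible to the mixture and neither dominance flip survives; there the two-games approach breaks down, and I expect to argue instead that a nontrivial convex combination of the component outputs cannot lie in the image of $s\mapsto\softmax(\lambda U^i s)$ for all games, i.e.\ that averaging escapes the exponential-family manifold that any quantal best response must inhabit. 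Making that final step quantitative is where I expect the real work to lie.
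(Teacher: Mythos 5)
Your high-level plan---contradiction, a dichotomy on whether the component potentials see the opponent's payoffs, level-set constructions paired with dominance flips---is in the same spirit as the argument the paper relies on (the appendix lemma stating that a convex combination of elementary models is either not ``strongly dominance responsive in the limit'' or not ``other-responsive,'' combined with the two propositions that quantal best response with $\lambda>0$ has both properties against a dominance-responsive opponent model). But there is a genuine gap at exactly the point you flag as your ``main obstacle'': you set out to construct a pair $G,G'$ with $f_i(G)=f_i(G')$, which requires all $K$ components to be fooled \emph{simultaneously}, and that is what drives you into the ``delicate residual case'' and the speculative exponential-family argument. That simultaneous reconciliation is never needed. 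The decisive move is quantitative: if even \emph{one} component $g_i^k$ with weight $w_k>0$ satisfies $g_i^k(G)=g_i^k(G')$ while the $\zeta$-dominant row is U in $G$ and D in $G'$, then in one of the two games that component puts probability at most $1/2$ on the dominant action, so the \emph{mixture} puts at most $1-w_k/2$ on it---a cap independent of $\zeta$. Since $QBR_i$ with precision $\lambda>0$ plays a $\zeta$-dominant action with probability at least $1/\bigl(1+(|A_i|-1)e^{-\lambda\zeta}\bigr)\to 1$, this is already a contradiction; the other $K-1$ components may react to the flip however they like. You in fact state the needed principle yourself (``the mixture can reach a vertex only if every component does'') but apply it only to the own-blind case rather than uniformly.

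With that repair the case analysis collapses to two branches. Either every positive-weight potential is dictated by player $i$'s own payoff, in which case $f_i$ is constant in the opponent's utilities and hence not other-responsive, whereas $QBR_i$ against a dominance-responsive $f_{-i}$ is other-responsive (witnessed by two explicit $2\times 2$ games in which the column player's dominant action flips; your fixed-$U^i$ injectivity lever is unnecessary). Or some positive-weight potential is either dictated by the opponent's payoff or non-encoding; in either case, for any $\zeta$ there exist $x,x'$ on a common level set whose player-$i$ coordinates differ by more than $\zeta$, and the two games whose rows are constantly $x$ and constantly $x'$ (in the two orders) finish the argument via the single-component cap above. Your worrisome residual case---one component reading only $u_i$ injectively and another reading only $u_{-i}$ injectively---is handled by the second branch applied to the latter component alone. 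As written, the proposal does not close, and the exponential-family step you defer is precisely the part that the correct argument never has to confront.
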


\section{GameNet's Feature Layers are Strategic}
\label{sec:gamenet-strategic}
We now have a framework with which we can formally study GameNet's feature layers.
Recall that these feature layers output a predicted level-0 behavior;
indeed, they are flexible enough to represent many existing level-0 models, such as the maxmax heuristic.
However, this flexibility turns out to be a double-edged sword.

Our first main result is that GameNet's level-0 model can represent a particular strategic model---quantal best response to maxmax---to arbitrary precision.
We give a constructive proof, providing parameter values for a 3-layer network that approximates this strategic model.

\begin{theorem} 
    \label{thm:gamenet-strategic}
    Let $q_1(G) = QBR_1(\text{maxmax}_{2}(G); 1, G)$.
    Let $\mathcal{G}$ be the set of games where all utilities are between $0$ and $C_{max}$ and all utilities differ by at least $C_{gap}$, where $C_{max}, C_{gap} > 0$ are arbitrary constants. 
    Then, there exists an instantiation of GameNet's feature layers that coincides with $q_1(G)$ for all $G \in \mathcal{G}$.
\end{theorem}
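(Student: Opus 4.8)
The plan is to first simplify the target function $q_1$ and only then construct the layers explicitly. Because every pair of utilities differs by at least $C_{gap}$, the global maximum entry of $U^2$ is unique; say it lies in column $j^*$. Player 2's maxmax behavior then places all of its mass on $j^*$, so $\text{maxmax}_2(G)$ is a point mass, and the expected utility of row $a_1$ against it is simply $U^1_{a_1, j^*}$. Consequently $q_1(G)(a_1) \propto \exp(U^1_{a_1, j^*})$; that is, $q_1(G)$ is the $\softmax$ over rows of the $j^*$-th column of $U^1$. Since GameNet's output layer applies a $\softmax$ to row sums, the whole problem reduces to building a single output channel whose row sums equal $U^1_{a_1,j^*}$ up to a row-independent additive constant.

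The first subtask is to identify the winning column $j^*$ using only the $\rowmax$/$\colmax$ pooling operations. Applying $\colmax$ to $U^2$ yields a matrix whose $(i,j)$ entry is $v_j := \max_{a_1} u_2(a_1,j)$, constant down each column; applying $\rowmax$ to \emph{that} matrix broadcasts the global maximum $v_{j^*}$ to every entry. Their difference $g := \colmax(U^2) - \rowmax(\colmax(U^2))$ is therefore $0$ exactly on column $j^*$ and, by the gap assumption, at most $-C_{gap}$ on every other column. Since GameNet's pooling acts only on hidden units, computing this max-of-max requires a dedicated layer: in the first hidden layer I would pass $U^1$ through unchanged (possible because utilities are nonnegative, so $\relu$ is the identity) and also materialize $\colmax(U^2)$ as a hidden unit, so that this layer's pooling stage makes both $\colmax(U^2)$ and $\rowmax(\colmax(U^2))$ available as features.

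The final hidden layer would then gate $U^1$ by $g$: I would set its single unit to $\relu(U^1 + \beta\, g)$ with $\beta \ge C_{max}/C_{gap}$. On column $j^*$ this equals $\relu(U^1) = U^1$ (nonnegativity again), while off $j^*$ the term $\beta g \le -\beta C_{gap} \le -C_{max}$ drives the argument below zero, so the entry is exactly $0$. Hence the row sum of this matrix is precisely $U^1_{a_1,j^*}$, and a single output channel reproduces $q_1$ exactly. I expect the main obstacle to be this gating step: because the output aggregates information by summing entire rows, the construction must suppress every off-$j^*$ entry \emph{exactly} rather than merely making it small. It is precisely the $C_{gap}$ margin, together with the boundedness $C_{max}$, that lets a finite multiplier $\beta$ turn the soft column-selection into an exact one—converting what would otherwise be an arbitrarily good approximation into genuine equality on all of $\mathcal{G}$.
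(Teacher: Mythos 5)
Your proposal is correct and follows essentially the same strategy as the paper's proof: use $\colmax$ followed by $\rowmax$ to broadcast player 2's maxmax value, exploit the $C_{gap}$ margin and the $C_{max}$ bound to build an exact ReLU gate that zeroes out every column except $j^*$, and let the output softmax over row sums produce the quantal best response. The only difference is cosmetic—you fold the paper's explicit $0/1$ indicator matrix $B$ and the subsequent gating into a single step, saving one hidden layer.
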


\begin{proof}
    We first give a series of computations that produce $q_1(G)$ for all $G \in \mathcal{G}$.
    Let
    \begin{align*}
        M_c &= \colmax(U^2), \\
        M_* &= \rowmax(M_c), \\
        B &= \relu(M_c / C_{gap} - M_* / C_{gap} + 1), \\
        E &= \relu(U^1 + C_{max} B - C_{max}), \text{and} \\
        Q &= \textstyle \softmax([\sum_j E_{i,j}]_{i=1}^n).
    \end{align*}
    Here, $M_c$ is a matrix where each column contains the maximum utility that player 2 could realize by playing that action.
    $M_*$ is a constant matrix containing player 2's maxmax value. 
    Then, because all utilities differ by at least $C_{gap}$, $B$ is a matrix containing a column of ones for player 2's maxmax action and zeros in all other columns. (This maxmax action is unique because all of the utilities are distinct.)
    Because all utilities are between $0$ and $C_{max}$, $E$ is a matrix containing player 1's utilities in player 2's maxmax column and zeros elsewhere.
    Finally, $Q$ is a vector containing player 1's quantal best response to the maxmax action.
    Therefore, $Q = q_i(G)$.

    We now show that GameNet's feature layers can represent these computations.
    Consider a model with three hidden layers, with two hidden units in the first two layers and one hidden unit in the final layer.
    Assume that all unspecified weights and biases are set to zero.
    In the first layer, setting $w^1_{1,2} = 1$ gives $H^{1,1} = U^1$, 
    and setting $w^1_{2,6} = 1$ gives $H^{1,2} = M_c$.
    In the second layer, setting $w^2_{1,1} = 1$ gives $H^{2,1} = U^1$, and  setting $w^2_{2,2} = 1 / C_{gap}$, $w^2_{2,4} = -1 / C_{gap}$, and $b^2_2 = 1$ gives $H^{2,2} = B$. 
    In the last layer, setting $w^3_{1,1} = 1$, $w^3_{1,2} = C_{max}$, and $b^3_1 = -C_{max}$ gives $H^{3,1} = E$.
    The softmax operations at the end of the feature layers complete the proof.
\end{proof}

Note that the assumption that the games in $\mathcal{G}$ have positive utilities was made purely for clarity of exposition.
It is straightforward to handle negative utilities by adding appropriate constant shifts.
We prove this more general claim in Appendix~\ref{apx:gamenet-negative}

\section{A Non-Strategic Neural Network}
\label{sec:enet}
Having seen that GameNet's feature layers can represent strategic reasoning, 
we now introduce a new neural network architecture that is only capable of non-strategic behavior.

\subsection{ElementaryNet}
In \Cref{thm:combination-of-elementary-nonstrategic}, we saw that it is possible to construct non-strategic behavioral models by composing an arbitrary response function with a potential function, as long as this potential function is either dictatorial or non-encoding.
Intuitively, as long as the potential function is nicely behaved, it adds an information bottleneck to the model, discarding information about the agents' utilities before the response function can perform more complex computations.

This inspires ElementaryNet, a new architecture that adds such an information bottleneck to GameNet's flexible feature layers.
ElementaryNet is a model of the form
$f_i(G) = \sum_{p=1}^P w_p \cdot h^p_i(\Phi^p(G))$,
where $\Phi^p: \R^2 \to \R$ are parameterized potential functions applied elementwise to the utility matrices; $h^p$ are parameterized response functions analogous to GameNet's feature layers; and $w_p$ is a vector of probabilities.
\Cref{fig:enet-diagram} illustrates this architecture.

We will study two particular instantiations of this model, differing in the specification of their potential functions.
The first instantiation, which we dub the \emph{learned-potential} model, allows the potentials to be arbitrary linear functions
\[
    \varphi^p(x,y) = \theta^p_x x + \theta^p_y y,
\]
where the coefficients $\theta^p_x$ and $\theta^p_y$ are trainable parameters.
This model is therefore able to use any linear potential function, so long as it is justified by the training data.

The second instantiation, which we dub the \emph{fixed-potential} model, uses four fixed linear potential functions:
\begin{align*}
    \varphi_{\text{own}}(x,y) &= x, \\
    \varphi_{\text{opp}}(x,y) &= y, \\
    \varphi_{\text{sum}}(x,y) &= x+y, \text{and} \\
    \varphi_{\text{diff}}(x,y) &= x-y.
\end{align*}
This set of potential functions is natural.
Noticing that the sign of the potentials is unimportant (as the response function can easily negate all potential values before applying any other computation), these are the four distinct linear potential functions that can be constructed using the coefficients $-1$, $0$, and $1$.
All four also have an economic interpretation.
$\varphi_{\text{own}}$ describes ``single-agent'' reasoning about the agent's own payoffs;
$\varphi_{\text{opp}}$ describes purely altruistic reasoning about the opponent's payoffs;
$\varphi_{\text{sum}}$ computes the welfare of each outcome; and
$\varphi_{\text{diff}}$ measures the (un)fairness of each outcome.
Accordingly, these potential functions---specifically, $\varphi_{\text{own}}$, $\varphi_{\text{sum}}$, and $\varphi_{\text{diff}}$---can be used to represent all of the level-0 heuristics from \citet{JAIR}.
A fixed-potential ElementaryNet model has no trainable parameters in its potential functions; its parameters consist solely of those in the response functions $h^p$ and the convex combination $w_p$.

\begin{figure}
    \centering
    \includegraphics[width=\columnwidth]{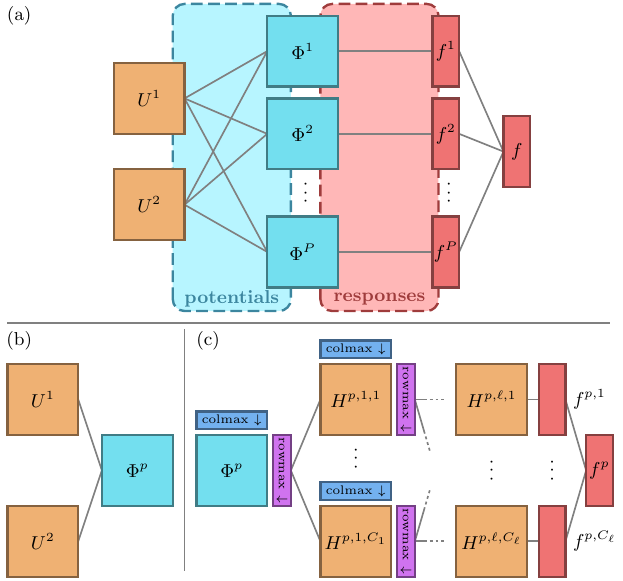}
    \caption{The ElementaryNet architecture. (a) The full model; (b) a potential function; and (c) a response function.}
    \label{fig:enet-diagram}
\end{figure}

\subsection{ElementaryNet is Non-Strategic}
Our second theoretical result is that ElementaryNet is weakly non-strategic: 
it cannot represent quantal best response to any dominance-responsive behavioral model.

\begin{theorem}
    ElementaryNet is weakly non-strategic.
\end{theorem}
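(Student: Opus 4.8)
The plan is to reduce this claim directly to the already-established \Cref{thm:combination-of-elementary-nonstrategic}, which states that any convex combination of elementary behavioral models is weakly non-strategic. Since ElementaryNet has the form $f_i(G) = \sum_{p=1}^P w_p \cdot h^p_i(\Phi^p(G))$ with $w_p$ a vector of probabilities (hence non-negative and summing to one), it suffices to verify that each summand $g^p_i(G) = h^p_i(\Phi^p(G))$ is itself an elementary behavioral model. Once that is shown, the theorem follows immediately by applying \Cref{thm:combination-of-elementary-nonstrategic} with $K = P$, $g^k_i = g^p_i$, and weights $w_p$.

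To show that each $g^p_i$ is elementary, I would check the three defining conditions of an elementary behavioral model against the construction of ElementaryNet. First, $\Phi^p$ must map a game to a potential matrix by applying a single potential function $\varphi^p : \R^2 \to \R$ elementwise to each utility vector $(u_1(a), u_2(a))$; this holds by construction, since ElementaryNet's potentials are applied elementwise to the utility matrices. Second, the response function $h^p_i$ must be an arbitrary function from potential matrices to probability vectors, which is satisfied because the response functions are structurally analogous to GameNet's feature layers and output distributions over actions. The third and genuinely substantive condition is that each $\varphi^p$ must be either dictatorial or non-encoding.

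The main work, then, lies in verifying the dictatorial-or-non-encoding property for the two instantiations. For the fixed-potential model, I would handle each of the four potentials by cases: $\varphi_{\text{own}}(x,y) = x$ and $\varphi_{\text{opp}}(x,y) = y$ are dictatorial, since each ignores one of its arguments entirely. For $\varphi_{\text{sum}}(x,y) = x + y$ and $\varphi_{\text{diff}}(x,y) = x - y$, I would show they are non-encoding: given any coordinate $i$ and bound $b > 0$, I can exhibit $x, x' \in \R^2$ with equal potential but $|x_i - x'_i| > b$ (for the sum, move one coordinate up by $2b$ and the other down by $2b$; for the difference, move both up by $2b$). For the learned-potential model, $\varphi^p(x,y) = \theta^p_x x + \theta^p_y y$, I would argue that every linear potential falls into one of these two categories: if either coefficient is zero the function is dictatorial, and if both coefficients are nonzero it is non-encoding, since one can perturb along the line $\theta^p_x \, dx + \theta^p_y \, dy = 0$ to change a chosen coordinate arbitrarily while fixing the potential. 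The only subtlety to flag is the degenerate case $\theta^p_x = \theta^p_y = 0$, where $\varphi^p$ is constant; this is trivially dictatorial (indeed both-ways dictatorial), so it is covered. Having established that every potential function arising in either instantiation is dictatorial or non-encoding, each summand is elementary, and the result follows.
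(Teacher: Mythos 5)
Your proposal is correct and follows essentially the same route as the paper: both reduce to \Cref{thm:combination-of-elementary-nonstrategic} by showing that every linear potential function is either dictatorial (when a coefficient vanishes) or non-encoding (by exhibiting two points on the same level line whose chosen coordinate differs by more than $b$). The paper simply treats the general linear case directly, which subsumes your case analysis of the four fixed potentials, so no further comparison is needed.
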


\begin{proof}
    We will first show that any linear potential function $\varphi(x, y) = \theta_x x + \theta_y y$ is either dictatorial or non-encoding.
    If $\theta_x = 0$ or $\theta_y = 0$, then $\varphi$ is dictatorial. 
    Otherwise, let $b > 0$ be arbitrary.
    Let $(x_1, y_1) = (0, 0)$ and $(x_2, y_2) = (2b, -2b \frac{\theta_x}{\theta_y})$. Then, we have $\varphi(x_1, y_1) = \varphi(x_2, y_2) = 0$ but $|x_1 - x_2| = 2b > b$.
    Similarly, let $(x_3, y_3) = (2b \frac{\theta_y}{\theta_x}, -2b)$; 
    we have $\varphi(x_1, y_1) = \varphi(x_3, y_3) = 0$ but $|y_1 - y_3| = 2b > b$.
    Therefore, $\varphi$ is non-encoding. 

    Because each of its potential functions are either dictatorial or non-encoding, ElementaryNet is therefore a convex combination of elementary models.
    Then, \Cref{thm:combination-of-elementary-nonstrategic} implies that it is weakly non-strategic, completing the proof.
\end{proof}

Intuitively, linear functions have linear level curves, which are either axis-aligned (implying that the function is dictatorial) or extend arbitrarily far in both dimensions (implying that the function is non-encoding).
In either case, linear potential functions discard enough information to ensure that the model cannot represent strategic reasoning.
The flexibility of the response functions is immaterial: it is the bottleneck in the potential functions that preclude strategic behavior.

Notably, ElementaryNet does not just disagree with quantal best response on isolated games.
Instead, it is incapable of representing broad categories of strategic reasoning.
It can fail to be ``other-responsive'', disregarding the other player's preferences entirely; such a model cannot possibly form accurate beliefs about an opponent. 
Otherwise, if it is other-responsive, it can then be made to play dominated actions with probability bounded away from zero, regardless of how large the losses in utility are.
We formalize and prove these properties in Appendix~\ref{apx:enet-nonstrategic}.

\section{Experiments}
\label{sec:experiments}

While the constraints in ElementaryNet's architecture make it provably unable to represent strategic reasoning, 
only an empirical study can determine the extent to which these constraints affect the model's ability to model human behavior.
Here we show the most positive result that we could hope for: that, when ElementaryNet is used as the level-0 specification for an iterative reasoning model, it matches the performance of GameNet.
What's more, since our model cleanly factors descriptions of strategic and nonstrategic behavior into different parameters, we can gain insights into human behavior by varying features of the model and analyzing its parameters.
We demonstrate how this type of analysis can yield interpretable results,
showing that iterative reasoning is a good model of human behavior; 
that the precise model of iterative reasoning is relatively unimportant; 
that far simpler level-0 specifications are inconsistent with non-strategic behavior in our data;
and that level-0 specifications from prior work lose relatively little predictive power.

\subsection{Experimental Setup}

\paragraph{Data.}
We used a dataset consisting of results from twelve experimental studies.
The first ten were used in past work comprehensively evaluating behavioral game theory models \cite{WLB2017, JAIR}.
The remaining two studies ran large-scale experiments on Amazon Mechanical Turk \cite{FL19, CHW23}.
In total, the dataset contains 26,553 observations across 366 distinct games.  
We provide more details about this data in Appendix~\ref{apx:experiments}.

\paragraph{Training.}
To train our models, we first randomly selected 20\% of the games to use as a validation set and 20\% to use as a test set, using the remaining 60\% as a training set.
Then, we trained up to 36 models with different hyperparameters, varying the L1 regularization coefficient applied to the neural network weights, the dropout probability, and the initial QCH model parameters.
The exact hyperparameter values we used are detailed in Appendix~\ref{apx:experiments}. 
We report the test loss of the model that had the lowest validation loss.
We used the squared L2 error between the predicted distribution and empirical distribution as our loss function, as past work has argued that it is appropriate for evaluating behavioral models~\cite{dEon2024}.
We repeated this procedure for 50 train/validation/test splits.

\paragraph{Confidence intervals.}
Due to the small size of our dataset and the comparative flexibility of our models, 
the losses depend heavily on precisely which games are in the training, validation, and test sets, adding substantial variance to our training procedure.
To combat this high variance, rather than reporting losses of our models, we report the difference in loss to a reference model on the same data split.
Taking paired differences in this way removes any variance caused purely by differences in the data, isolating differences in model performance.
We report bias-corrected accelerated bootstrapped confidence intervals~\cite{Efron1987} of the differences in test loss with confidence levels of both 68\% and 95\%.
We also report absolute model losses in Appendix~\ref{apx:results}.

\subsection{Comparing to GameNet}
First, to evaluate whether ElementaryNet's restrictions harmed its predictive performance, we compared it to GameNet.
We trained QCHp models using a variety of GameNet and ElementaryNet models as their level-0 input.
The ElementaryNet models each used a single learned potential function.
We varied GameNet's feature layers and ElementaryNet's response functions to use 1, 2, or 3 hidden layers with widths of 10, 20, or 50.
We also evaluated the Uniform + QCHp model as a baseline.

Our results are shown in \Cref{fig:results-performance}.
Consistent with the results reported by \citet{GameNet}, we found that the best GameNet models performed far better than the Uniform + QCHp baseline.
GameNet models with only a single hidden layer performed best, while deeper models had worse test loss, a likely sign of overfitting.
However, the best ElementaryNet models---which also had one hidden layer---had nearly identical performance, with test losses that are statistically indistinguishable from the best GameNet model.
These results show that, despite the additional constraints on ElementaryNet, it constitutes a state-of-the-art model of human strategic behavior when used in tandem with a QCHp model.

\begin{figure}
    \centering
    \includegraphics[width=\columnwidth]{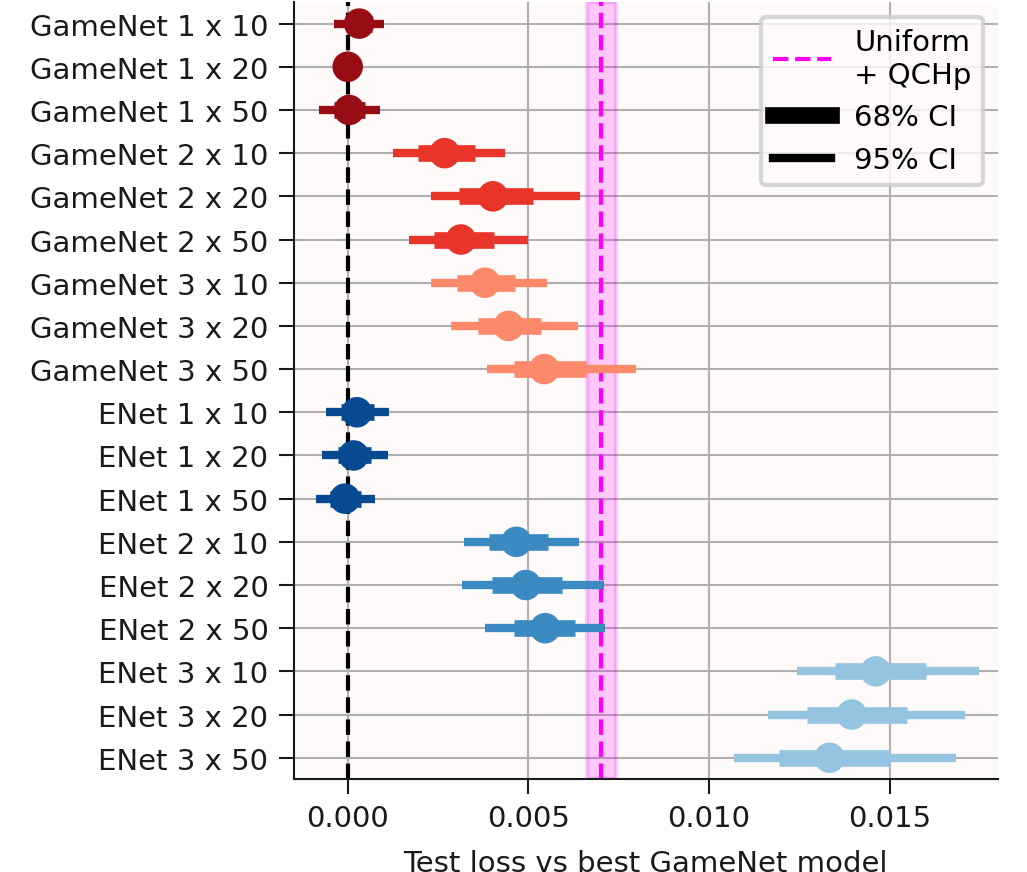}
    \caption{ElementaryNet and GameNet level-0 models trained with a QCH-Poisson strategic model. The best ElementaryNet models are similar in performance to the best GameNet model. Lower values are better.}
    \label{fig:results-performance}
\end{figure}

\subsection{Leveraging our Interpretable Model}
Having showed that ElementaryNet + QCHp is a state-of-the-art model of strategic behavior, we then sought to leverage its theoretical guarantees to understand what types of reasoning might be on display in our data.
We ran four follow-up experiments varying different features of the model. 
In each case, we found insights about the kinds of strategic or non-strategic reasoning exhibited by the participants in our dataset.

\subsubsection{Models with no strategic reasoning.}
We first compared the performance of our best ElementaryNet + QCHp model to one with no QCH model at all.
This experiment is similar to the one that \citet{GameNet} ran, where they found that GameNet performed better with only a level-0 model. 
However, we showed that GameNet's level-0 model is able to emulate strategic models, making it impossible to conclude that iterative strategic reasoning is a poor model of human strategic behavior from this empirical result.
In contrast, we can be sure that ElementaryNet's level-0 model alone is non-strategic.

We trained learned-potential ElementaryNet models with 1, 2, and 3 hidden layers, with 50 hidden units in each layer, and no strategic model.
The results (\Cref{fig:results-ablation-noqch}) show that these purely non-strategic models fit the data extremely poorly, with test losses significantly worse than even the baseline model.
Thus, we can conclude that iterative reasoning is indeed a good model of the behavior of our subjects, far outstripping any non-strategic model.

\begin{figure}
    \centering
    \includegraphics[width=\columnwidth]{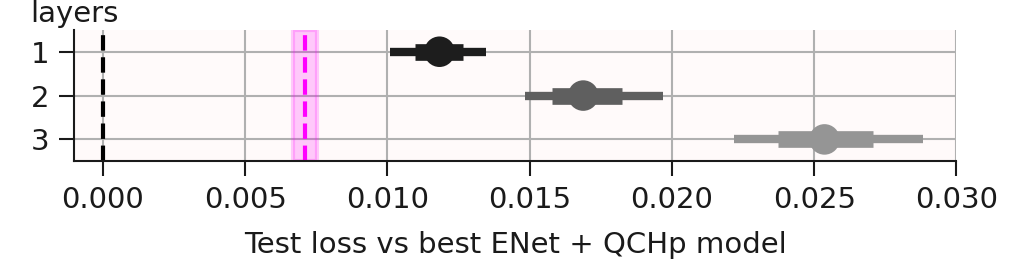}
    \caption{ElementaryNet models with no QCH model. Purely non-strategic models fit the data extremely poorly.}
    \label{fig:results-ablation-noqch}
\end{figure}   

\subsubsection{Different levels of strategic reasoning.}
Next, we varied the strategic model.
We focused on our best performing level-0 model: ElementaryNet with a learned potential and a response function consisting of 1 hidden layer of 50 units.
However, we replaced the strategic model with QCH models having arbitrary level distributions up to 1, 2, or 3 levels, allowing these level distributions to be learned during training.
The results (\Cref{fig:results-ablation-qch}) show that the differences between the various QCH models were relatively small.
ElementaryNet models trained with QCH1 and QCH3 had performance that was statistically indistinguishable from the results with the original model using QCHp.
The model using QCH2 performed the worst, but still had an average test loss far better than that of the baseline Uniform + QCHp model.

\begin{figure}
    \centering
    \includegraphics[width=\columnwidth]{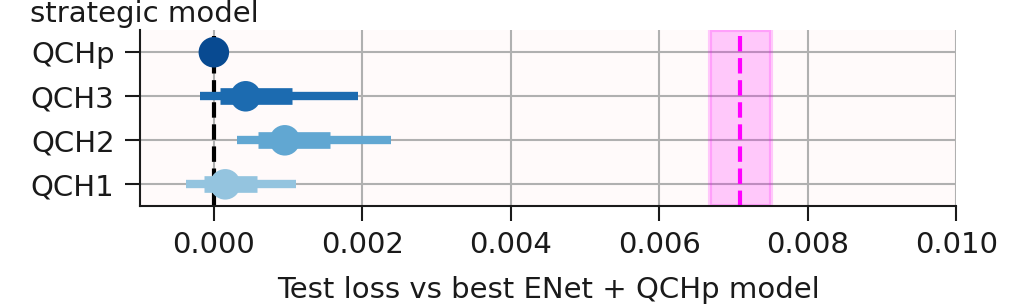}
    \caption{ElementaryNet level-0 models with various QCH models. Changing the structure of the level distribution has little effect on model performance.}
    \label{fig:results-ablation-qch}
\end{figure}   

Each of these QCH models have interpretable level distributions, which are either discrete histograms (for the QCH1, QCH2, and QCH3 models) or Poisson distributions (for the QCHp model). 
\Cref{fig:results-parameters-qch} shows the average learned level distribution for each of the ElementaryNet models, as well as the baseline Uniform + QCHp model.
These plots show that each of the QCH models co-trained with an ElementaryNet level-0 model place over 70\% of their probability on level-0 reasoners, while the baseline model places only 33\% of its probability on level-0 reasoners. 
This difference suggests that the majority of subjects in our data are likely performing some type of relatively rich non-strategic reasoning, which the baseline model is forced to model as imperfect strategic behavior.
This qualitative finding echoes previous results from \citet{JAIR}, who found that using level-0 models beyond the uniform baseline increased the fitted proportion of level-0 reasoners from 30\% to 60\%.

\begin{figure}
    \centering
    \includegraphics[width=\columnwidth]{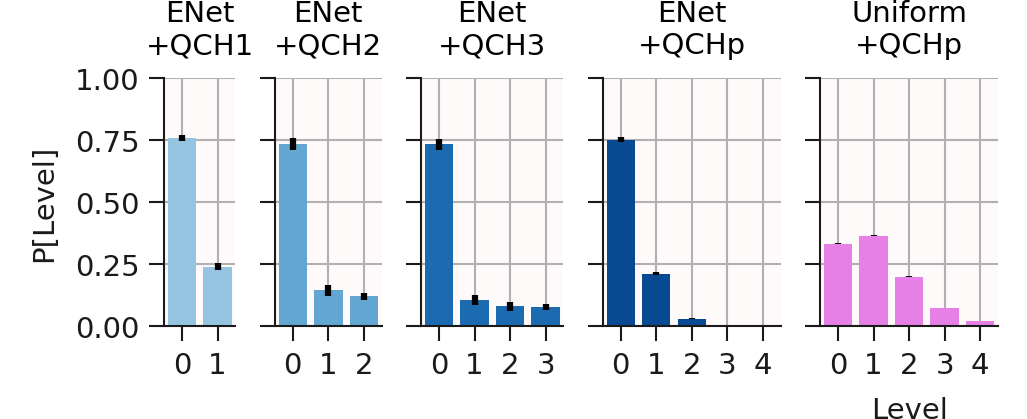}
    \caption{Fitted QCH parameters. Error bars show 95\% confidence intervals on parameter values.}
    \label{fig:results-parameters-qch}
\end{figure}

\subsubsection{Simpler level-0 specifications.}
Having studied our strategic model extensively, we next varied features of our level-0 model.
A natural question to ask here is whether our nuanced definition of non-strategic behavior was necessary.
After all, it would be easier to simply create a level-0 model that makes predictions based only on the agent's own payoffs.
Such a model is clearly non-strategic, as it cannot reasonably form beliefs about the opponent's behavior without knowing the opponent's payoffs.
Would this simpler level-0 specification be sufficient for modelling our subjects' behavior?

We trained an ElementaryNet model with a single potential function fixed to the ``own'' potential, which conforms to this more restrictive level-0 specification.
As before, we used a response function with one hidden layer of 50 units and a QCHp strategic model.
The results (\Cref{fig:results-ablation-potentials}, light green) show that this simpler level-0 model performed far worse than our best ElementaryNet model, with average test loss closer to that of the Uniform + QCHp baseline.
This provides evidence that the experimental subjects exhibited rich non-strategic behavior that cannot be expressed without knowledge of the opponent's payoffs.

\subsubsection{Level-0 potentials from prior work.}
Lastly, we considered a middle ground: an ElementaryNet model with multiple fixed potential functions.
We trained an ElementaryNet model with four potential functions, using each of the four fixed potential functions that we described in \Cref{sec:enet}.
This model is unable to adapt its potential functions to the data, but can still express richer non-strategic behavior, such as the heuristics used by~\citet{JAIR}.
As before, the model still used response functions with one hidden layer of 50 units and a QCHp strategic model.
The results (\Cref{fig:results-ablation-potentials}, dark green) show that the model with four fixed potentials far outperformed the model with just one.
However, it still did not match the performance of our best ElementaryNet model, which had just a single learned potential function.
This result suggests that it may be possible to develop simple level-0 heuristic models that outperform those from past work by considering more nuanced potential functions beyond welfare and fairness.

\begin{figure}
  \centering
  \includegraphics[width=\columnwidth]{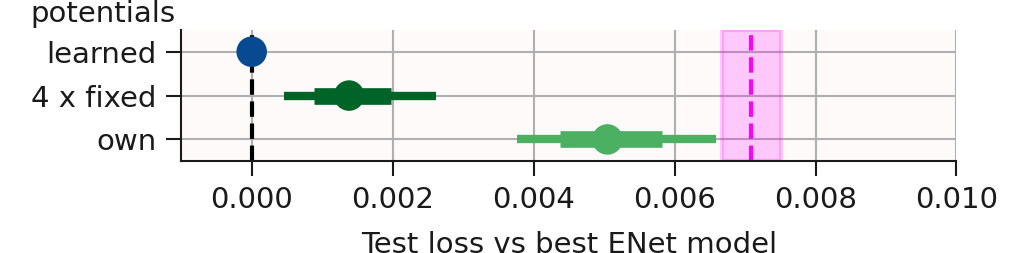}
  \caption{ElementaryNet level-0 models with various sets of potential functions. Both alternative specifications of non-strategic behavior degraded performance.} 
  \label{fig:results-ablation-potentials}
\end{figure}

\section{Conclusions and Discussion}
Model performance often comes at the cost of interpretability.
In this work, we proved that GameNet, an opaque, state-of-the-art predictor of human strategic behavior, has deep-seated interpretability problems: its purportedly level-0 model is able to emulate strategic reasoning.
However, we showed that ElementaryNet, a variation of GameNet with additional restrictions, is completely unable to represent strategic behavior.
These restrictions make the model far easier to interpret, making it possible to derive insights about human behavior by inspecting the model's learned parameters or varying its features, without losing predictive power. 

We see several promising directions for future work; we describe three here.
First, we focused on interpreting the strategic model and the potential functions, two pieces of the model with interpretable parameters, but not the response functions, which are still black boxes.
Future work could analyze these response functions by studying how they behave in simple cases, or by testing whether they continue to work well with additional restrictions.

Second, \citet{Zhu2024} presented an alternative method for improving behavioral game theory models, using a neural network to control the parameters of a quantal response model.
Fusing their strategic models with ElementaryNet's level-0 predictions could produce interpretable, yet highly predictive, models of human behavior. 

Third, we focused on unrepeated, two-player normal-form games. 
These relatively simple games already evoke strategic behavior, but they do not include other elements such as sequential interactions, random chance, and imperfect information that are common in real-world settings.
Extending our models to handle these elements is non-trivial, as it will require developing compelling definitions of non-strategic behavior beyond normal-form games and significantly modifying our architecture.
Nonetheless, we are hopeful that the core idea of combining a rich model of non-strategic behavior with a structured model of strategic reasoning will be successful in these more complex settings.

\section*{Acknowledgments}
This work was funded by an NSERC Discovery Grant, a CIFAR Canada AI Research Chair (Alberta Machine Intelligence Institute), and computational resources provided both by UBC Advanced Research Computing and a Digital Research Alliance of Canada RAC Allocation. Additionally, Greg d'Eon and Hala Murad were supported in part by funding from the UBC Advanced Machine Learning -- Training Network, and Hala Murad was supported in part by funding from a UBC Work Learn International Undergraduate Research Award.

\bibliography{references}
\nocite{*}

\clearpage
\appendix
\appendixpage

In these appendices, we include the following additional details:
\begin{itemize}
    \item Appendix~\ref{apx:heuristics}: formal definitions of additional level-0 heuristic strategies.
    \item Appendix~\ref{apx:gamenet-negative}: extending \Cref{thm:gamenet-strategic} to handle negative utilities.
    \item Appendix~\ref{apx:enet-nonstrategic}: proving that ElementaryNet cannot represent broad categories of strategic reasoning.
    \item Appendix~\ref{apx:experiments}: additional details about experiments.
    \item Appendix~\ref{apx:results}: results in terms of absolute model performance.
\end{itemize}

\section{Level-0 Heuristics}
\label{apx:heuristics}
In this section, we enumerate the level-0 heuristic strategies used by \citet{JAIR}.

\paragraph{Maxmax.}
An action is a \emph{maxmax} action for agent $i$ if it maximizes their best-case utility.
The \emph{maxmax behavioral model} uniformly randomizes over an agent's maxmax actions: 
\[
    f^{\text{maxmax}}_i(a_i) \propto \begin{cases}
        1, & a_i \in \argmax_{a_i'} \max_{a_{-i}} u_i(a_i', a_{-i}); \\ 
        0, &\text{otherwise.}
    \end{cases} 
\]

\paragraph{Maxmin.}
An action is a \emph{maxmin} action for agent $i$ if it maximizes their worst-case utility.
The \emph{maxmin behavioral model} uniformly randomizes over an agent's maxmin actions:
\[
    f^{\text{maxmin}}_i(a_i) \propto \begin{cases}
        1, & a_i \in \argmax_{a_i'} \min_{a_{-i}} u_i(a_i', a_{-i}); \\
        0, & \text{otherwise.}
    \end{cases}
\]

\paragraph{Minimax regret.}
In an action profile $(a_i, a_{-i})$, agent $i$'s \emph{regret} is the maximum amount of utility they could gain by deviating to another action:
\[
    r(a_i, a_{-i}) = \max_{a_i'} u_i(a_i', a_{-i}) - u_i(a_i, a_{-i}).
\]
The \emph{minimax regret behavioral model} uniformly randomizes over actions that minimize the agent's worst-case regret:
\[
    f^{\text{regret}}_i(a_i) \propto \begin{cases}
        1, & a_i \in \argmin_{a_i} \max_{a_{-i}} r(a_i, a_{-i}); \\
        0, & \text{otherwise.}
    \end{cases}
\]

\paragraph{Max symmetric.} 
A game is \emph{symmetric} if the players' roles are interchangeable: that is, if the action spaces $A_1 = A_2$ are identical, and the utility functions $U_1 = U_2^T$ are unchanged by switching the identities of the players.
In a symmetric game, an action is a \emph{max symmetric} action for agent $i$ if it maximizes their utility under the assumption that the opponent plays the same action.
The \emph{max symmetric behavioral model} uniformly randomizes over max symmetric actions: for all symmetric games,
\[
    f^{\text{symmetric}}_i(a_i) \propto \begin{cases}
        1, & a_i \in \argmax_{a_i} u_i(a_i, a_i); \\
        0, & \text{otherwise.}
    \end{cases}
\]
This model is not defined for games that are not symmetric; it can be taken to output the uniform distribution on those games.

\paragraph{Maxmax fairness.}
An action profile is a \emph{maxmax fairness} action profile if it minimizes the absolute difference between the players' utilities $d(a_i, a_{-i}) = |u_1(a_i, a_{-i}) - u_2(a_i, a_{-i})|$.
The \emph{maxmax fairness behavioral model} uniformly randomizes over actions that appear in a maxmax fairness action profile:
\[
    f^{\text{fairness}}_i(a_i) \propto \begin{cases}
        1, & a_i \in \argmin_{a_i} \min_{a_{-i}} d(a_i, a_{-i}); \\
        0, & \text{otherwise.}
    \end{cases}
\]

\paragraph{Maxmax welfare.}
An action profile is a \emph{maxmax welfare} action profile if it maximizes the sum of the players' utilities $w(a_i, a_{-i}) = u_1(a_i, a_{-i}) + u_2(a_i, a_{-i})$.
The \emph{maxmax welfare behavioral model} uniformly randomizes over actions that appear in a maxmax welfare action profile:
\[
    f^{\text{welfare}}_i(a_i) \propto \begin{cases}
        1, & a_i \in \argmax_{a_i} \max_{a_{-i}} w(a_i, a_{-i}); \\
        0, & \text{otherwise.}
    \end{cases}
\]

Each of these heuristics each operate on different transformations of the utility functions.
The maxmax, maxmin, minmax regret, and max symmetric heuristics are all functions of the agent's utility $u_i$, neglecting the opponent's utility $u_{-i}$. 
Similarly, maxmax fairness is a function of the difference between the utilities $u_1(a) - u_2(a)$, and maxmax welfare is a function of the sum $u_1(a) + u_2(a)$.

\clearpage
\section{Additional Proofs: GameNet with Negative Utilities}
\label{apx:gamenet-negative}

In Section 3, we argued that it is possible to extend the proof of Theorem 3.1 to handle negative utilities. 
We formalize this claim here.
First, we restate the theorem, allowing for negative utilities.

\begin{theorem} 
    Let $q_1(G) = QBR_1(\text{maxmax}_{2}(G); 1, G)$.
    Let $\mathcal{G'}$ be the set of games where all utilities are between $-C_{max}$ and $C_{max}$ and all utilities differ by at least $C_{gap}$, where $C_{max}, C_{gap} > 0$ are arbitrary constants. 
    Then, there exists an instantiation of GameNet's feature layers that coincides with $q_1(G)$ for all $G \in \mathcal{G'}$.
\end{theorem}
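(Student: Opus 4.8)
The plan is to reuse the construction from the proof of \Cref{thm:gamenet-strategic} almost verbatim, isolating the single step that relied on nonnegative utilities and repairing it with a constant shift. First I would observe that the intermediate quantities $M_c = \colmax(U^2)$, $M_* = \rowmax(M_c)$, and $B = \relu(M_c/C_{gap} - M_*/C_{gap} + 1)$ depend only on \emph{differences} among player~2's utilities: in player~2's maxmax column $M_c = M_*$, so the argument equals $1$, while in every other column the gap assumption forces $M_c \le M_* - C_{gap}$, making the argument $\le 0$. Neither conclusion uses the sign of any utility, so $B$ remains the indicator of player~2's (unique) maxmax column exactly as before, now for all $G \in \mathcal{G'}$.

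The only place where the original argument used $u_1 \ge 0$ is the step $E = \relu(U^1 + C_{max} B - C_{max})$, where nonnegativity guaranteed that the ReLU acts as the identity on the maxmax column. With possibly-negative utilities this would clip away the negative entries and destroy the values we need. I would fix this by shifting: replace the step with
\[
    E = \relu(U^1 + 2 C_{max} B - C_{max}).
\]
In the maxmax column $B = 1$, yielding $\relu(U^1 + C_{max}) = U^1 + C_{max}$, since $U^1 \ge -C_{max}$ on $\mathcal{G'}$ makes the argument nonnegative; in every other column $B = 0$, yielding $\relu(U^1 - C_{max}) = 0$, since $U^1 \le C_{max}$ makes the argument nonpositive. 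Thus $E$ holds player~1's utilities \emph{shifted by the constant $C_{max}$} in the maxmax column and zeros elsewhere.

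Finally I would appeal to the shift-invariance of the softmax. We have $\sum_j E_{i,j} = u_1(a_i, a_{j^*}) + C_{max}$, where the additive $C_{max}$ is identical across all rows $i$ and hence cancels inside $\softmax$, so $Q = \softmax([u_1(a_i, a_{j^*})]_i) = q_1(G)$ is unchanged. In terms of GameNet's parameters, the only modification to the construction in \Cref{thm:gamenet-strategic} is to set the last-layer weight multiplying $B$ to $2C_{max}$, i.e.\ $w^3_{1,2} = 2 C_{max}$, while keeping $b^3_1 = -C_{max}$ and all other weights as before. I expect the only delicate point to be the bookkeeping on the interval bounds: verifying simultaneously that the shift is large enough to keep the maxmax-column argument nonnegative yet the bias still drives every non-maxmax column nonpositive, since these two requirements pull the single constant $C_{max}$ in opposite directions and must both hold uniformly across $\mathcal{G'}$.
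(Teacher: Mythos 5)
Your high-level sequence of computations is exactly the paper's: they too define $E' = \relu(U^1 + 2C_{max}B' - C_{max})$ and appeal to the shift-invariance of the softmax, so the final-layer fix and the bookkeeping you flag as ``delicate'' are both correct. But there is a genuine gap in your claim that the $E$ step is \emph{the only place} the original argument used nonnegativity, and consequently in your parameter instantiation (``keeping $b^3_1 = -C_{max}$ and all other weights as before''). GameNet applies a ReLU at \emph{every} hidden layer, not just where the abstract computation calls for one: the construction in \Cref{thm:gamenet-strategic} obtains $H^{1,1} = U^1$, $H^{1,2} = M_c$, and $H^{2,1} = U^1$ only because, with nonnegative utilities, $\relu$ acts as the identity on these matrices. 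On $\mathcal{G}'$ this fails: with all other weights as before you actually get $H^{1,1} = \relu(U^1)$, which clips every negative payoff (and destroys all information if $U^1$ is entirely negative), and $H^{1,2} = \relu(\colmax(U^2))$, which can collapse several columns to zero --- in the extreme case where player 2's maxmax value is negative, every column of $B$ becomes a column of ones, so $B$ is no longer the indicator of the maxmax column. Your observation that $B$ depends only on differences is true of the \emph{abstract} quantities $M_c$ and $M_*$, but the network computes $B$ from the ReLU'd hidden unit and its row-max, not from $M_c$ and $M_*$ themselves.

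The paper's proof repairs this by shifting at the \emph{first} layer rather than only at the last: it sets $b^1_1 = b^1_2 = C_{max}$ so that the first hidden units carry $U^1 + C_{max}$ and $\colmax(U^2) + C_{max}$, both guaranteed nonnegative on $\mathcal{G}'$, and hence pass through the ReLUs intact. The constant shift cancels in the difference defining $B'$, so $B'$ is still the correct indicator, and because $H^{2,1}$ now carries $U^1 + C_{max}$ rather than $U^1$, the final bias must be $b^3_1 = -2C_{max}$ (not $-C_{max}$) to produce $\relu(U^1 + 2C_{max}B' - C_{max})$. To complete your argument you would need to add these first-layer biases and adjust the final bias accordingly; the rest of your reasoning then goes through.
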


The proof is similar to the proof of Theorem 3.1.

\begin{proof}
    Like in the original proof, we first give a series of computations that produce $q_1(G)$ for all $G \in \mathcal{G'}$.
    Let
    \begin{align*}
        M_c' &= \colmax(U^2 + C_{max}), \\
        M_*' &= \rowmax(M_c'), \\
        B' &= \relu(M_c' / C_{gap} - M_*' / C_{gap} + 1), \\
        E' &= \relu(U^1 + 2C_{max} B' - C_{max}), \text{and} \\
        Q' &= \textstyle \softmax([\sum_j E_{i,j}']_{i=1}^n).
    \end{align*}
    Similar to before, $M_c'$ is a matrix that contains the maximum utility that player 2 could realize by playing that action, plus $C_{max}$.
    $M_*'$ is a constant matrix containing player 2's maxmax value plus $C_{max}$. 
    Because adding $C_{max}$ to all utilities does not change the maxmax action, $B'$ is a matrix containing a column of ones for player 2's maxmax action and zeros in all other columns.
    Then, consider the term $U^1 + 2C_{max} B' - C_{max}$.
    In player 2's maxmax column, this matrix contains player 1's utilities plus $C_{max}$; because their utilities are greater than $-C_{max}$, these entries are all positive.
    In the other columns, this matrix contains player 1's utilities minus $C_{max}$; because their utilities are less than $C_{max}$, these entries are all negative.
    Thus, $E'$ contains player 1's utilities plus $C_{max}$ in player 2's maxmax column, and zeros elsewhere.
    Finally, because constant offsets do not affect softmax operations, $Q'$ is a vector containing player 1's quantal best response to the maxmax action.
    Therefore, $Q' = q_i(G)$.

    Then, consider a GameNet model as before with three hidden layers, with two hidden units in the first two layers and one hidden unit in the final layer.
    In the first layer, setting $w^1_{1,2} = 1$ and $b^1_1 = C_{max}$ gives $H^{1,1} = U^1 + C_{max}$, 
    and setting $w^1_{2,6} = 1$ and $b^1_2 = C_{max}$ gives $H^{1,2} = M_c' + C_{max}$.
    In the second layer, setting $w^2_{1,1} = 1$ gives $H^{2,1} = U^1 + C_{max}$, and setting $w^2_{2,2} = 1 / C_{gap}$, $w^2_{2,4} = -1 / C_{gap}$, and $b^2_2 = 1$ gives $H^{2,2} = B'$. 
    In the last layer, setting $w^3_{1,1} = 1$, $w^3_{1,2} = 2C_{max}$, and $b^3_1 = -2C_{max}$ gives $H^{3,1} = E'$.
    The softmax operations at the end of the feature layers complete the proof.
\end{proof}

\clearpage
\section{Additional Proofs: ElementaryNet Cannot Approximate Strategic Behavior}
\label{apx:enet-nonstrategic}

In Section 4, we argued that ElementaryNet does not just disagree with quantal best response models on isolated games, but is instead unable to represent entire broad categories of strategic reasoning. 
Here, we formalize this claim.

First, we describe two properties of quantal best response. 
The first is that it does not just place its modal probability on dominant actions: as the margin of dominance grows, it plays them with probability approaching $1$.
\begin{definition} 
    Let $f_i$ be a behavioral model. 
    For any game $G$ containing a dominant action $a_i^+$, let $D(f_i, G) = f_i(G)(a_i^+)$ be the probability that $f_i$ places on the dominant action.
    Let $\mathcal{G}(A, \zeta)$ be the set of games with action space $A$ containing a $\zeta$-dominant action.
    Let $M(f_i, A, \zeta) = \min_{G \in \mathcal{G}(A, \zeta)} D(f_i, G)$ be the minimum probability that $f_i$ places on a $\zeta$-dominant action over all games with action space $A$.
    We say that $f_i$ satisfies \emph{strong dominance responsiveness in the limit} iff $\lim_{\zeta \to \infty} M(f_i, A, \zeta) = 1$ for all action spaces $A$.
\end{definition}
In other words, if a model satisfies strong dominance responsiveness in the limit, it may sometimes play dominated actions, but it can be made to play these actions arbitrarily rarely by increasing the margin by which they are dominated.
\begin{proposition}
    For any positive precision $\lambda > 0$ and behavioral model $f_{-i}$, the quantal best response $q_i(G) = QBR_i(f_{-i}(G); \lambda, G)$ satisfies strong dominance responsiveness in the limit.
\end{proposition}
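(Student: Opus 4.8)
The plan is to show that the quantal best response to any fixed opponent behavior places probability approaching $1$ on a dominant action as its margin of dominance $\zeta \to \infty$. Let me think through the structure carefully.

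We are given a game $G$ with a $\zeta$-dominant action $a_i^+$, meaning $u_i(a_i^+, a_{-i}) > u_i(a_i, a_{-i}) + \zeta$ for all $a_i \neq a_i^+$ and all $a_{-i}$. The opponent behavior is $s_{-i} = f_{-i}(G)$, some distribution over $A_{-i}$. We want to lower-bound $q_i(G)(a_i^+) = QBR_i(s_{-i}; \lambda, G)(a_i^+)$ uniformly over all such games, and show this bound goes to $1$.

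Let me work out the key inequality. By definition,
$$q_i(G)(a_i^+) = \frac{\exp[\lambda \cdot u_i(a_i^+, s_{-i})]}{\sum_{a_i} \exp[\lambda \cdot u_i(a_i, s_{-i})]}.$$
The softmax probability on $a_i^+$ relative to any other action $a_i$ depends only on the utility gap $u_i(a_i^+, s_{-i}) - u_i(a_i, s_{-i})$. Since $u_i(a_i^+, s_{-i}) = \E_{a_{-i} \sim s_{-i}}[u_i(a_i^+, a_{-i})]$ and similarly for $a_i$, and the pointwise gap exceeds $\zeta$ for every $a_{-i}$, taking expectations preserves the bound:
$$u_i(a_i^+, s_{-i}) - u_i(a_i, s_{-i}) = \E_{a_{-i} \sim s_{-i}}\bigl[u_i(a_i^+, a_{-i}) - u_i(a_i, a_{-i})\bigr] > \zeta.$$
This is the crucial step: the $\zeta$-dominance, stated pointwise over opponent actions, lifts to a gap in expected utilities against the mixed behavior $s_{-i}$, no matter what $s_{-i}$ is.

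Now I would bound the denominator. Write $|A_i| = k$, which is fixed once the action space $A$ is fixed. Dividing numerator and denominator by $\exp[\lambda \cdot u_i(a_i^+, s_{-i})]$,
$$q_i(G)(a_i^+) = \frac{1}{1 + \sum_{a_i \neq a_i^+} \exp[\lambda (u_i(a_i, s_{-i}) - u_i(a_i^+, s_{-i}))]} \geq \frac{1}{1 + (k-1)\exp(-\lambda \zeta)},$$
using $\lambda > 0$ and the expected-utility gap exceeding $\zeta$ for each of the $k-1$ terms. This bound is uniform over all $G \in \mathcal{G}(A, \zeta)$, so $M(q_i, A, \zeta) \geq 1/(1 + (k-1)\exp(-\lambda \zeta))$. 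As $\zeta \to \infty$, the right-hand side tends to $1$, and since probabilities are at most $1$, the squeeze gives $\lim_{\zeta \to \infty} M(q_i, A, \zeta) = 1$ for every action space $A$, which is exactly strong dominance responsiveness in the limit.

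The argument is essentially routine, so there is no deep obstacle; the one step that requires genuine care is the \emph{uniformity} over the set $\mathcal{G}(A,\zeta)$. The opponent behavior $s_{-i}$ can be arbitrary and the non-dominant utilities can be anything satisfying the margin constraint, so I must make sure the bound depends only on $\lambda$, $\zeta$, and $k = |A_i|$ and not on any other feature of the game. The lift from pointwise dominance to a gap in expected utility is what delivers this, since it holds for every realization of $a_{-i}$ and hence for every mixture; I would take care to state that step explicitly before passing to the softmax bound.
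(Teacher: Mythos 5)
Your proof is correct and follows essentially the same route as the paper's: lift the pointwise $\zeta$-dominance to a gap of at least $\zeta$ in expected utility against the arbitrary mixture $s_{-i}$, then bound the softmax probability below by $1/(1 + (|A_i|-1)\exp(-\lambda\zeta))$ and let $\zeta \to \infty$. Your explicit remark about uniformity over $\mathcal{G}(A,\zeta)$ is a nice touch but reflects the same bound the paper derives.
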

\begin{proof}
    Let $G \in \mathcal{G}(A, \zeta)$ be an arbitrary game with a $\zeta$-dominant action $a_i^+$, and let $s_{-i} = f_{-i}(G)$.
    First, for any action $a_i \neq a_i^+$, the difference between the expected utilities of these two actions satisfies
    \begin{align*}
        &u(a_i^+, s_{-i}) - u(a_i, s_{-i}) \\
        &=   \E_{a_{-i} \sim s_{-i}} u(a_i^+, a_{-i}) - u(a_i, a_{-i}) \\
        &\ge \E_{a_{-i} \sim s_{-i}} \zeta \\
        &= \zeta,
    \end{align*}
    where the inequality follows from the fact that $a_i^+$ is $\zeta$-dominant.
    Then, the probability that $q_i$ places on the dominant action is
    \begin{align*}
        &q_i(G)(a_i^+) \\
        &= \frac{\exp(\lambda \cdot u_i(a_i^+, s_{-i}))}{\sum_{a_i}\exp(\lambda \cdot u_i(a_i, s_{-i}))} \\
        &= \frac{\exp(\lambda \cdot u_i(a_i^+, s_{-i}))}{\exp(\lambda \cdot u_i(a_i^+, s_{-i})) + \sum_{a_i \neq a_i^+} \exp(\lambda \cdot u_i(a_i, s_{-i}))} \\
        &= \frac{1}{1 + \sum_{a_i \neq a_i^+} \exp(\lambda \cdot (u_i(a_i, s_{-i}) - u_i(a_i^+, s_{-i})))} \\
        &\ge \frac{1}{1 + \sum_{a_i \neq a_i^+} \exp(-\lambda \zeta)} \\
        &= \frac{1}{1 + (|A_i| - 1) \exp(-\lambda \zeta).}
    \end{align*}
    As $\zeta \to \infty$, this lower bound converges to $1$, completing the proof.
\end{proof}

The second property of quantal best response is that---under mild conditions on the opponent's strategy---it is also sensitive to changes in the \emph{opponent's} payoffs.
\begin{definition}
    A behavioral model $f_i$ is \emph{other-responsive} iff there are a pair of games $G = (A, u)$ and $G' = (A, u')$ such that $u_i(a) = u'_i(a)$ for all $a \in A$, but $f_i(G) \neq f_i(G')$.
\end{definition}
\begin{proposition}
    For any positive precision $\lambda > 0$ and dominance-responsive behavioral model $f_{-i}$, the quantal best response $q_i(G) = QBR_i(f_{-i}(G); \lambda, G)$ satisfies other-responsiveness.
\end{proposition}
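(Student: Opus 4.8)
The plan is to exhibit two games $G = (A, u)$ and $G' = (A, u')$ that share agent $i$'s payoffs but differ in the opponent's payoffs, and to leverage the dominance responsiveness of $f_{-i}$ to force the opponent's behavior---and hence agent $i$'s quantal best response---to differ between them. The only leverage available is weak: dominance responsiveness pins down merely the \emph{mode} of $f_{-i}$, and only in games possessing a $\zeta$-dominant action for the specific $\zeta > 0$ guaranteed by the definition. The main design challenge is therefore to arrange matters so that controlling only the mode suffices to move the quantal best response.

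First I would restrict attention to $2 \times 2$ games, where the opponent has only two actions. For a binary opponent action space, knowing that action $1$ is the mode of $f_{-i}$ is equivalent to knowing $s_{-i}(1) > 1/2$, and knowing that action $2$ is the mode is equivalent to $s_{-i}(1) < 1/2$. This converts the purely qualitative mode guarantee into a quantitative bound on the opponent's mixed behavior that can be fed into the expected-utility calculation. Next I would fix agent $i$'s payoffs in both games so that the expected-utility gap between $i$'s two actions is a strictly monotone function of $s_{-i}(1)$; for instance, taking $u_i(1,1) = 1$ and the other three entries of $U^1$ equal to $0$ makes $u_i(1, s_{-i}) - u_i(2, s_{-i}) = s_{-i}(1)$ exactly.

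I would then choose the opponent's payoffs in $G$ so that its action $1$ is $\zeta$-dominant---its payoff from action $1$ exceeds that from action $2$ by more than $\zeta$, uniformly over agent $i$'s actions---and choose them in $G'$ so that action $2$ is $\zeta$-dominant instead, while keeping $U^1$ identical across the two games (so that the games differ only in the opponent's payoffs, as required by other-responsiveness). By dominance responsiveness, the mode of $f_{-i}(G)$ is action $1$, giving $s_{-i}(1) > 1/2$ in $G$, whereas the mode of $f_{-i}(G')$ is action $2$, giving $s_{-i}(1) < 1/2$ in $G'$.

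Finally I would conclude by comparing the two responses. Since the expected-utility gap equals $s_{-i}(1)$, it is strictly larger in $G$ than in $G'$; and because the two-action logit response $q_i(G)(1) = 1/(1 + \exp(-\lambda \cdot \mathrm{gap}))$ is strictly increasing in the gap for every $\lambda > 0$, it follows that $q_i(G)(1) \neq q_i(G')(1)$ and hence $q_i(G) \neq q_i(G')$, establishing other-responsiveness. The main obstacle is the conceptual first step: recognizing that the weakness of the mode-only guarantee can be entirely circumvented by working with a binary opponent action space and engineering $i$'s payoffs so that the sign of the utility gap is governed solely by which side of $1/2$ the opponent's probability falls on. Once that reduction is in place, the remaining computation is routine.
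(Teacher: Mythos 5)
Your proposal is correct and follows essentially the same route as the paper's proof: a pair of $2\times 2$ games with identical row-player payoffs in which opposite columns are dominant, so that dominance responsiveness forces the opponent's probability to opposite sides of $1/2$ and the strictly monotone logit response must therefore differ. If anything, you are slightly more careful than the paper in explicitly scaling the opponent's payoff gap to exceed the $\zeta$ guaranteed by dominance responsiveness, whereas the paper's games fix that gap at $1$.
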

\begin{proof}
    Consider the following two games $G$ and $G'$:
    \begin{center}
        \begin{tabular}{cc}
        \begin{tabular}{r|ll}
            \toprule
              & L      & R      \\
            \midrule
            U & (1, 1) & (0, 0) \\
            D & (0, 1) & (1, 0) \\
            \bottomrule
        \end{tabular} &
        \begin{tabular}{r|ll}
            \toprule
              & L      & R      \\
            \midrule
            U & (1, 0) & (0, 1) \\
            D & (0, 0) & (1, 1) \\
            \bottomrule
        \end{tabular} 
        \vspace{0.5em} \\
        G & G'
        \end{tabular}
    \end{center}
    Let $i$ be the row player.
    We have $u_i(a) = u'_i(a)$ for all $a \in A$.
    It remains to be shown that $q_i(G) \neq q_i(G')$.
    
    Consider the first game $G$.
    Let $s_{-i} = f_{-i}(G)$; because $f_{-i}$ is dominance responsive and L is a dominant action for the column player, we have $s_{-i}(L) > 0.5 > s_{-i}(R)$.
    Then, for the row player, we have $u_i(U, s_{-i}) > 0.5 > u_i(D, s_{-i})$.
    This implies that
    \begin{align*}
        & q_i(G)(U) \\
        &= \frac{\exp(\lambda u_i(U, s_{-i}))}{\exp(\lambda u_i(U, s_{-i})) + \exp(\lambda u_i(D, s_{-i}))} \\
        &> \frac{\exp(\lambda u_i(U, s_{-i}))}{2\exp(\lambda u_i(U, s_{-i}))} \\
        &= \frac{1}{2}.
    \end{align*}
    In the second game $G'$, R is a dominant action for the column player; a similar argument shows that $q_i(G')(U) < \frac{1}{2}$.
    Therefore, $q_i(G) \neq q_i(G')$, completing the proof.
\end{proof}

In their proof of Theorem 2.8, \citet{WLBnonstrategic} proved a stronger result that any convex combination of elementary models can be sensitive to their inputs in one of these two ways, but not both.
\begin{lemma}
    \label{lemma:combination-of-elementary-nonstrategic}
    Let $f_i(G) = \sum_{k=1}^K w_k g_i^k(G)$ be a convex combination of elementary behavioral models.
    Then, either $f_i$ is not strongly dominance-responsive in the limit, or it is not other responsive.
\end{lemma}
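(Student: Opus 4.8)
The plan is to sidestep reasoning about the two properties directly and instead split on the structure of the potential functions, since this immediately produces the required disjunction. After discarding any components whose weight $w_k$ is zero, and taking $i = 1$ without loss of generality (so the agent's own payoff is the first coordinate of each utility vector), I would classify each potential $\varphi^k$ as \emph{own-dictatorial} (independent of the opponent's payoff $y$) or not. Since the elementary-model definition forces every $\varphi^k$ to be dictatorial or non-encoding, ``not own-dictatorial'' means $\varphi^k$ is either opponent-dictatorial or non-encoding. The two horns of the lemma then correspond exactly to the two cases of this dichotomy.

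First horn: suppose every $\varphi^k$ is own-dictatorial. Then each $\Phi^k(G)$ depends only on the agent's own payoffs, so any two games $G, G'$ with identical own-payoff matrices induce identical potential matrices for every component; hence $g_i^k(G) = g_i^k(G')$ for all $k$, and therefore $f_i(G) = f_i(G')$. This is precisely a failure of other-responsiveness, so the first disjunct holds. Second horn: suppose instead that some component $k_0$ with $w_{k_0} > 0$ has a potential that is not own-dictatorial. The key fact I would establish is that such a potential is ``own-blind in the limit'': for every $b > 0$ there exist payoff vectors $p, q$ with $\varphi^{k_0}(p) = \varphi^{k_0}(q)$ but $|p_1 - q_1| > b$. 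For non-encoding potentials this is immediate from the definition (taking coordinate $i = 1$); for opponent-dictatorial potentials it follows by fixing $y$ and varying $x$ freely. Given any target margin $\zeta$, I would pick such $p, q$ with own-payoff gap exceeding $\zeta$, say $p_1 < q_1$, and build a two-action game $G_\zeta$ whose first row has every outcome equal to $q$ and whose second row has every outcome equal to $p$, making the first action $\zeta$-dominant. I would also build the swapped game $G'_\zeta$ in which the two rows are exchanged, making the second action $\zeta$-dominant.

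The crux, and the step I expect to be the main obstacle, is controlling the arbitrary response function $h^{k_0}$: equal potentials do not by themselves force equal play, because $h^{k_0}$ can still discriminate by row index. The swap construction is designed precisely to neutralize this. Because $\varphi^{k_0}(p) = \varphi^{k_0}(q)$, the potential matrix $\Phi^{k_0}$ is the \emph{same} constant matrix for both $G_\zeta$ and $G'_\zeta$, so $g_i^{k_0}$ returns one and the same distribution $(v, 1-v)$ over the two action indices in both games. A pigeonhole argument then finishes the job: either $v \le \tfrac{1}{2}$, in which case the $\zeta$-dominant action of $G_\zeta$ (index $1$) receives component probability at most $\tfrac{1}{2}$, or $1 - v \le \tfrac{1}{2}$, in which case the $\zeta$-dominant action of $G'_\zeta$ (index $2$) does. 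In either case, the dominant action's probability under the full model satisfies $f_i(\cdot)(a^+) \le w_{k_0} \cdot \tfrac{1}{2} + (1 - w_{k_0}) \cdot 1 = 1 - \tfrac{w_{k_0}}{2}$, since every other component contributes at most $1$. Hence $M(f_i, A, \zeta) \le 1 - \tfrac{w_{k_0}}{2}$ for the two-action space $A$, and because this bound holds for every $\zeta$, the limit as $\zeta \to \infty$ cannot equal $1$. Thus $f_i$ fails strong dominance responsiveness in the limit, so the second disjunct holds and the dichotomy proves the lemma.
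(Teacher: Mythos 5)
Your proposal is correct and follows essentially the same route as the paper's proof: the same case split on whether every positively-weighted potential is dictated by the agent's own payoff, the same pair of two-row constant games with rows swapped, and the same pigeonhole bound of $1 - w_{k}/2$ on the dominant action's probability. Your treatment is slightly more explicit in two places where the paper is terse---spelling out why a non-own-dictatorial potential (whether opponent-dictatorial or non-encoding) admits equal-potential pairs with arbitrarily large own-payoff gaps, and flagging that the swap construction is what prevents the response function from discriminating by row index---but these are elaborations of the same argument, not a different one.
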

\begin{proof}
    Consider the potential functions $\varphi^k$ in each of the component models $g_i^k$ having weight $w_k > 0$.
    If all of these potentials are dictated by player $i$'s payoff, then the potential outputs are constant with respect to the opponent's payoffs;
    in this case, the output probabilities must also be constant with respect to the opponent's payoffs, meaning that the model is not other-responsive.
    
    Otherwise, there exists some $k$ with positive weight $w_k > 0$ such that the potential is not dictated by player $i$'s payoff.
    In this case, for all $\zeta > 0$, there exist two payoff tuples $x = (x_i, x_{-i})$ and $x' = (x'_i, x'_{-i})$ such that $x_i - x_{-i} > \zeta$ but $\varphi(x) = \varphi(x')$.
    Then, consider the following games:
    \begin{center}
    \begin{tabular}{cc}
        \begin{tabular}{r|ll}
            \toprule
              & L    & R    \\
            \midrule
            U & $x$  & $x$  \\
            D & $x'$ & $x'$ \\
            \bottomrule
        \end{tabular} &
        \begin{tabular}{r|ll}
            \toprule
              & L    & R    \\
            \midrule
            U & $x'$ & $x'$ \\
            D & $x$  & $x$  \\
            \bottomrule
        \end{tabular}
        \vspace{0.5em} \\
        G & G' \\
    \end{tabular}
    \end{center}
    Because $\varphi(x) = \varphi(x')$, we must have $\Phi(G) = \Phi(G')$ and therefore $g_i^k(G) = g_i^k(G')$.
    However, U is a $\zeta$-dominant action in $G$, and D is a $\zeta$-dominant action in $G'$.
    This means that, in one of the two games, $g_i^k$ plays a $\zeta$-dominant action with probability no more than $1/2$.
    Therefore, there exists a game where $f_i$ plays a $\zeta$-dominant action with probability no more than $1 - w_k/2$ regardless of $\zeta$, implying that $f_i$ is not strongly dominance responsive in the limit.
\end{proof}
Intuitively, the restrictions on the potential functions ensure that they either 
depend solely on the agent's own utility, making it impossible for the model to be sensitive to changes in the opponent's utility function,
or have arbitrarily wide level curves, making it impossible for the model to detect whether an action is dominated.

\clearpage
\section{Experiment Details}
\label{apx:experiments}

In this section, we describe additional details about our dataset and the training and evaluation procedure used in our experiments.

\paragraph{Data.}
We aggregated data from twelve experimental studies of human behavior in unrepeated normal-form games. 
The first ten were used in past work performing comprehensive evaluations of behavioral game theory models \cite{WLB2017, JAIR}.
The remaining two studies, by \citet{FL19} and \citet{CHW23}, ran large-scale experiments on Amazon Mechanical Turk.
\Cref{tab:all12} lists details of these datasets.

\begin{table*}
    \centering
    \begin{tabular}{llrr}
        \toprule
        Name   & Source        & Games & Obs. \\
        \midrule
        SW94   & \cite{SW94}   &    10 &  400 \\
        SW95   & \cite{SW95}   &    12 &  576 \\
        CGCB98 & \cite{CostaGomes2001} &    18 & 1296 \\
        GH01   & \cite{GH01}   &    10 &  500 \\
        CVH03  & \cite{CVH03}  &     8 & 2992 \\
        HSW01  & \cite{HSW01}  &    15 &  869 \\
        HS07   & \cite{HS07}   &    20 & 2940 \\
        SH08   & \cite{SH08}   &    18 & 1288 \\
        RPC08  & \cite{RPC08}  &    17 & 1210 \\
        CGW08  & \cite{CGW08}  &    14 & 1792 \\
        FL19   & \cite{FL19}   &   200 & 8250 \\
        CHW23  & \cite{CHW23}  &    24 & 4440 \\
        \midrule
        \textsc{All12} & Union of above & 366 & 26553 \\
        \bottomrule
    \end{tabular}
    \caption{Data used in our experiments. The ``Games'' column lists the number of unique games we included from each source, and ``Obs.'' columns list the total number of observations.}
    \label{tab:all12}
\end{table*}

\paragraph{Preprocessing.}
We preprocessed the data by standardizing the payoff matrices in each game, normalizing them to have zero mean and unit variance.
This preprocessing is useful for three reasons.
First, it controls for inflation during the three decades spanned by the data: the subjective meaning of a \$1 payment was different in 2023 than it was in 1993.
Second, it controls for differences between experiment payment protocols, as some experiments paid participants for their performance in all games, while others paid participants for one random game.
Third, it stabilizes model training by ensuring that gradients are on a relatively consistent scale.

\paragraph{QCH parameterization.}
We had some difficulty optimizing the precision and level distribution parameters of our QCH models directly.
We found that our models would quickly converge to high precisions, causing vanishing gradient problems that made it difficult for our optimizers to effectively improve the higher-level strategies; optimizing the log of the precision helped to solve this problem.
Similarly, for our QCH models with a finite number of levels, we optimized the logit level distribution, rather than optimizing the level probabilities directly.

\paragraph{Normalizing potential functions.}
We also had some difficulty optimizing the potential functions in the ElementaryNet models with learned potentials.
The problem was that when the potential function's coefficients $\theta_x$ and $\theta_y$ were too large or too small, the resulting potentials had a variance much larger or smaller than 1. 
This caused the response functions to have inconsistent gradients and ultimately led to poor performance.
To solve this problem, we divided the coefficients by $\sqrt{\theta_x^2 + \theta_y^2}$ before applying the potential function, ensuring that the output potentials have a consistent scale.
We found that this normalization step significantly improved performance.

\paragraph{Objective function.}
Following the recommendation of \citet{dEon2024}, we optimized the squared L2 error of the model's predictions.
Concretely, let $G = (A, u)$ be a game, let $\bar p(G) \in \Delta(A_1)$ be the empirical distribution of actions that subjects played when acting as player 1, and let $f(G)$ be the model's predicted distribution.
Then, the squared L2 loss on this game is
\begin{align*}
    L(f(G), \bar p(G)) 
    &= \Vert f(G) - \bar p(G) \Vert_2^2 \\
    &= \sum_{a \in A_1} (f(G)_a - \bar p(G)_a)^2.
\end{align*}
We aggregated losses across multiple games by taking a weighted average, where each game was weighted in proportion to the number of observations it had in the dataset.

\paragraph{Hyperparameters.}
To evaluate a single model on a single split of the data into a training, validation, and test set, we trained the model with a number of different hyperparameter settings, varying the initial settings of the QCH models' parameters, the amount of L1 regularization applied to the neural networks' weights, and the dropout probability applied to the neural networks.
These hyperparameters are detailed in \Cref{tab:hyperparams}.
For each training run, we ran Adam~\cite{Adam} for 50,000 epochs with a learning rate of $3 \cdot 10^{-4}$, using the projected gradient algorithm to impose simplex constraints where necessary.

\begin{table*}
    \centering
    \begin{tabular}{rcc}
        \toprule
        Hyperparameter & ElementaryNet & GameNet \\ 
        \midrule
        Initial QCH precision & \multicolumn{2}{c}{$\text{Log-Uniform}(0.03, 0.3)$} \\
        Initial QCH Poisson rate & \multicolumn{2}{c}{$\text{Uniform}(0.5, 1.5)$} \\
        Initial QCH level distribution logits & \multicolumn{2}{c}{$\text{Normal}(0, 1)$} \\
        Neural net weights L1 coefficient & $\{10^{-4}, 10^{-5}, 10^{-6}\}$ & $\{10^{-3}, 10^{-4}, 10^{-5}\}$ \\
        Dropout probability & $\{ 0, 0.01, 0.02, 0.05 \}$ & $\{0, 0.01, 0.02, 0.05, 0.1\}$ \\
        \bottomrule 
    \end{tabular}
    \caption{Hyperparameters used for model training. We ran all combinations of hyperparameter values listed for L1 and dropout twice, sampling random QCH parameters for each run independently from the listed distributions.}
    \label{tab:hyperparams}
\end{table*}

\paragraph{Computational resources.} 
We found that our training procedure was not dramatically accelerated by GPUs, as our dataset and models are relatively small for modern deep learning standards.
We thus chose to train our models on CPUs, as we had access to more CPU resources than GPUs.
We ran our experiments on a shared computer cluster having 14,672 CPUs, using a total of 40 CPU-years of computation time. 

\clearpage
\section{Additional Results}
\label{apx:results}
In Section 5, rather than directly reporting the loss of each model, we reported differences between models' losses.
This increased statistical power by reducing variance caused by differences in the data split used to evaluate each replication of each model.
However, this statistical power came at the expense of making it harder to gauge each model's performance in absolute terms.
In this section, we provide analogous figures showing absolute test losses: 
\begin{itemize}
  \item \Cref{fig:results-absolute-performance} compares GameNet and ElementaryNet models with a variety of neural net widths and depths, and a QCHp strategic model.
  \item \Cref{fig:results-absolute-ablation-noqch} compares ElementaryNet models with 1 to 3 hidden layers and no strategic model.
  \item \Cref{fig:results-absolute-ablation-qch} compares ElementaryNet models with alternative strategic models (QCH1, QCH2, and QCH3).
  \item \Cref{fig:results-absolute-ablation-potentials} compares ElementaryNet models with alternative potential functions: one with a single potential fixed to the ``own'' potential function, and one with all four of the fixed potential functions that we defined.
\end{itemize}

\begin{figure}
    \centering
    \includegraphics[width=\columnwidth]{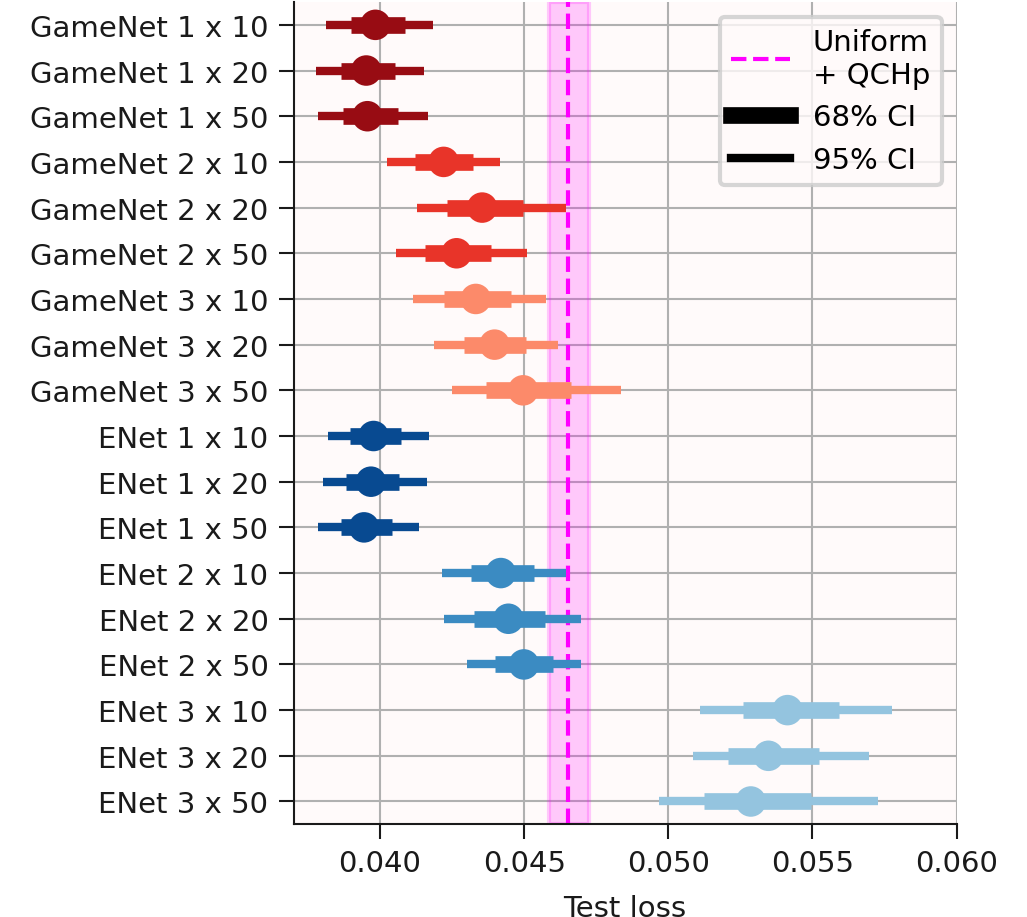}
    \caption{ElementaryNet and GameNet level-0 models trained with a QCH-Poisson strategic model.}
    \label{fig:results-absolute-performance}
\end{figure}

\begin{figure}
    \centering
    \includegraphics[width=\columnwidth]{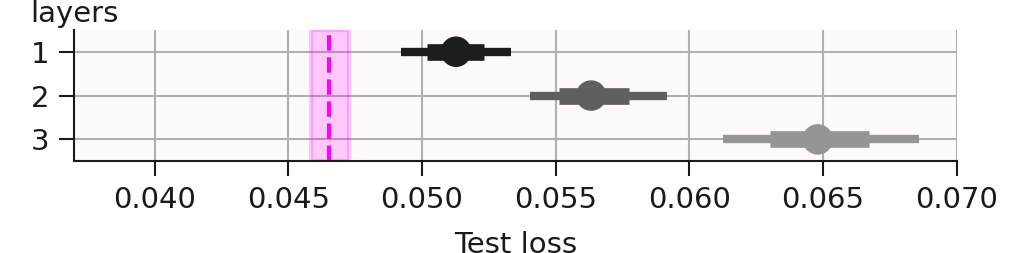}
    \caption{ElementaryNet models with no QCH model.}
    \label{fig:results-absolute-ablation-noqch}
\end{figure}

\begin{figure}
    \centering
    \includegraphics[width=\columnwidth]{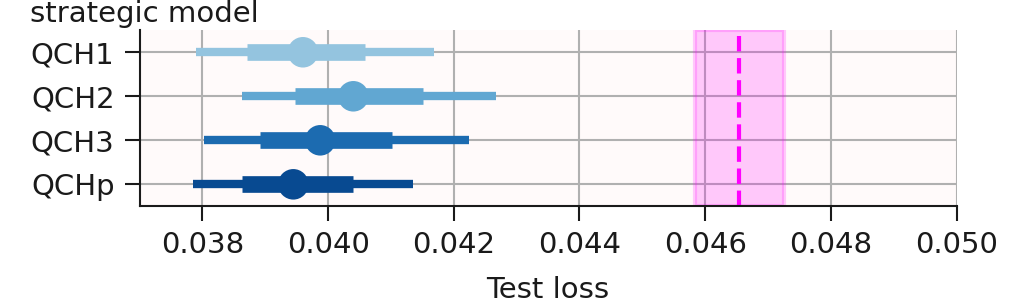}
    \caption{ElementaryNet level-0 models with various QCH models.}
    \label{fig:results-absolute-ablation-qch}
\end{figure}

\begin{figure}
    \centering
    \includegraphics[width=\columnwidth]{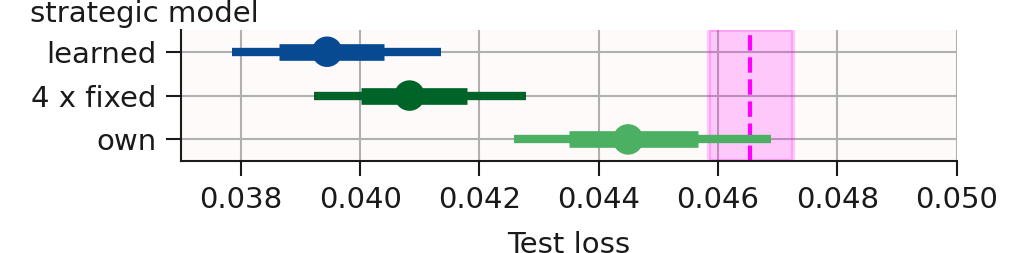}
    \caption{ElementaryNet level-0 models with various potential functions.}
    \label{fig:results-absolute-ablation-potentials}
\end{figure}

\end{document}